\newtheorem*{theorem*}{Theorem}
\newtheorem{theorem}{Theorem}
\newtheorem{lemma}{Lemma} 
\newtheorem{proposition}{Proposition} 
\newtheorem{remark}{Remark}
\newtheorem*{remark*}{Remark}
\newtheorem{definition}{Definition}
\newcommand{\defvec}[1]{\expandafter\newcommand\csname v#1\endcsname{{\mathbf{#1}}}}
\newcounter{ct}
	\edef\letter{\alph{ct}}
	\edef\letter{\Alph{ct}}
\newcommand{\Matern}{Mat\'ern\xspace}
\newcommand{\HM}{Hida-Mat\'ern\xspace}
\newcommand{\HMs}{Hida-Mat\'erns\xspace}
\newcommand{\MHM}{\text{MHM}}
\newcommand{\WienerProcess}{\mathcal{W}}
\newcommand{\statenoise}{\boldsymbol{\epsilon}}
\newcommand{\cctop}{H}
\newcommand{\given}{\mid}
\newcommand{\Ks}{\vK^S}
\newcommand{\fs}{\vf^S}
\newcommand{\gs}{\vg^S}
\newcommand{\zs}{\vz^S}
\newcommand{\Fhat}{\hat{F}}
\newcommand{\Pinf}{\vP_{\infty}}
\newcommand{\bPhi}{\boldsymbol{\Phi}}
\newcommand{\bLambda}{\boldsymbol{\Lambda}}
\newcommand{\balpha}{\boldsymbol{\alpha}}
\newcommand{\hIndexSet}{\mathcal{I}}
\newcommand{\A}{\vA}
\newcommand{\Q}{\vQ}
\newcommand{\vzero}{\boldsymbol{0}}
\newcommand{\field}[1]{\ensuremath{\mathbb{#1}}}
\newcommand{\reals}{\field{R}}
\newcommand{\complex}{\field{C}}
\newcommand{\naturalNumbers}{\field{N}}
\DeclareMathOperator*{\argmin}{\arg\!\min}
\DeclarePairedDelimiter{\norm}{\lVert}{\rVert}
\DeclarePairedDelimiter{\abs}{\lvert}{\rvert}
\DeclareMathOperator*{\cov}{cov}
\DeclareMathOperator*{\Expect}{\rm E} %
\DeclareMathOperator{\KL}{\mathcal{KL}}
\DeclareMathOperator{\vect}{vec}
\DeclarePairedDelimiter\floor{\lfloor}{\rfloor}
\definecolor{retroblue1}{cmyk}{0.89, 0.46, 0.24, 0.04}
\definecolor{retroblue2}{cmyk}{0.58, 0.15, 0.40, 0}
\definecolor{retropale1}{cmyk}{0, 0.72, 0.91, 0}
\gdef\newactiveunderscore
\newcommand{\modifiedsubsuper}{%
	\newactiveunderscore %
	\catcode`_ 12 %
	\ifnum\mathcode`^="8000 \else\edef\currentspmathcode{\the\mathcode`^}\fi
	\mathcode`^ "8000
	\catcode`^ 12 %
}%
\begin{document}

\title{\HM Kernel}

\author{\name Matthew Dowling \email matthew.dowling@stonybrook.edu \\
       \addr Department of Electrical Engineering\\
       Stony Brook University\\
       Stony Brook NY, 11953, USA
       \AND
       \name Piotr A. Sok\'{o}\l{} \email piotr.sokol@stonybrook.edu \\
       \addr Department of Neurobiology and Behavior\\
       Stony Brook University\\
       Stony Brook NY, 11953, USA
	   \AND
	   \name Il Memming Park \email memming.park@stonybrook.edu \\
       \addr Department of Neurobiology and Behavior\\
       Stony Brook University\\
       Stony Brook NY, 11953, USA}

\maketitle

\begin{abstract}%
	We present the class of \textit{\HM} kernels, a canonical family of covariance functions that densely represents the entire space of stationary Gauss-Markov processes.
	It extends upon the \Matern kernels with flexible oscillatory components.
	Any stationary kernel, including the widely used squared-exponential and spectral mixture kernels, are either directly within this class or are appropriate asymptotic limits, demonstrating the generality of this class.
	Taking advantage of its Markovian nature, we show how to analytically represent a \HM process as a state-space model using only the kernel and its derivatives.
	In turn this allows us to perform Gaussian process inference more efficiently and side step the usual computational burdens.
	We further improve the numerical stability and reduce computational complexity by exploiting the structural properties of the state-space representation.
\end{abstract}

\begin{keywords}
Hida-Matern Kernel, Stationary Gaussian Process, Kernel Methods
\end{keywords}

\section{Introduction}
The Gaussian process (GP) framework provides principled means to make inferences on functions~\citep{Rasmussen2005-mq}.
Endowed with a calibrated measure of uncertainty, GPs fit well within the Bayesian machine learning paradigm and can be embedded as a part of a broad class of models~\citep{bui_hieararchical_gps,rasmussen_classification_approx_gps,ng_gps_graphs_semisupervised}.
However, in practice the computational burden of using GPs typically limits their applicability to moderately sized datasets.
Scalable frameworks such as inducing point methods, specially structured kernels, streaming approaches, state-space formulation of GPs and so on seek to remedy this weakness by reducing the computational complexity of GP inferences~\citep{bui_streaming_gps_2017,titsias_sparse_variational_paper,Wilson2015-ak,sarkka_hartikainen_ssms_2010}.
Earlier literature on GPs in the 1950s and 60s was largely focused on their theoretical properties.
We discovered that many of these findings have practical implications for developing more scalable inference frameworks.
Building on pioneering works by Takeyuki Hida and others, we introduce the class of \HM kernels, which form a basis for translation invariant kernels and readily admit a state-space representation through which exact inference can be made in linear time.

A specific GP is characterized by its covariance function, or kernel; if its analytical form is given, its inspection reveals properties such as stationarity, periodicity, and differentiability.
However, beyond the second order statistical structure of the process, the kernel alone fails to rigorously quantify aspects such as Markovianity, sample path properties, and uniqueness of representation.
Building on P.~Levy's constructive formulation of GPs, or as it was called, a \textit{canonical representation}~\citep{levy_canonical_gps,levy_wiener_random_functions_1951},
T.~Hida was able to broadly generalize characteristics of GPs~\citep{hida_canonical_representation_1960,hida_gp_book}.

While in a practical sense many of these theoretical properties may be of little consequence, fully understanding the Markov properties of GPs can help alleviate the computational burden commonly associated with them.
For example, the GP with \Matern $\tfrac{1}{2}$ kernel (a.k.a. Ornstein-Uhlenbeck process) is well known to admit fast inference schemes thanks to its Markov property~\citep{stein_interpolation_textbook}.
As we will see in Section~\ref{sec:SSM}, there exist other ways of defining a generalized GP Markov property, which in turn, provide a clear avenue to formulating them in terms of a state-space model (SSM).
The SSM representation can then be used so that exact GP inference can be had in linear time given ordered data.
\begin{wrapfigure}[18]{r}{0.5\textwidth}
	\centering
	\includegraphics[width=0.47\textwidth]{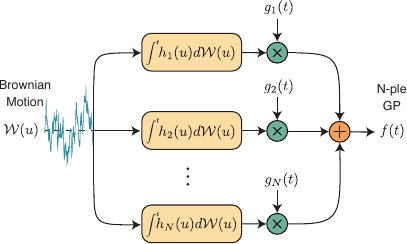}
	\caption{An $N$-ple GP can be thought of as the sum of $N$ additive processes, $f_i(t)$.  The deterministic functions $h_i(t)$ are integrated with respect to the Brownian motion, multiplied pointwise with $g_i(t)$ and then summed.}
	\label{label:NplGP}
\end{wrapfigure}

More than this though, we will show that the class of GPs with admissible state-space representations turns out to be very broad. In fact, \textbf{all stationary, real-valued, and finitely differentiable GPs}, which we will refer to as \HM GPs (H-M GPs), have canonical representations that must be linear combinations of basis functions initially derived by T.~Hida in~\citet{hida_canonical_representation_1960}.
The derived covariance functions, or \HM kernels, corresponding to these basis thus span the space of all such kernels governing H-M GPs.
Moreover, translation-invariant covariance functions that govern GPs not in this class can be approximated arbitrarily well by linear combinations of \HM kernels (Thm.~\ref{thm::main_paper::hida_materns_universal}).

Although state-space formulations of GPs have been examined extensively in recent literature, their formulation involves parametrizing a stochastic differential equation (SDE) whose stationary covariance matches that of the GP in question~\citep{solin_infinite_horizon_gp,sarkka_log_time_gp,solin_periodic_state_space,solin_thesis_2016}.
In contrast, our approach only requires determining all derivatives of the covariance function.
Furthermore, owing to Markov properties that will be discussed, the SSM formulation of any H-M GP is trivial to construct (Sec.~\ref{sec:SSM:Nple}).

To facilitate thinking beyond the second order structure of GPs, we re-introduce the importance of defining GP Markov properties through simple, yet enlightening examples (Sec.~\ref{sec:background}).
We then introduce the family of \HM kernels, present their universality with respect to $\mathcal{L}_2$ convergence in the space of translation-invariant kernels, how their Markovianity leads to simple SSM representations, and then how said representations open the door for computationally feasible GP inference (Sec.~\ref{sec:HM}).
We show that certain linear SDEs with stable dynamics admit a solution that lies within the \HM family, which consequently implies that the matrix exponential of such dynamics matrices has a closed form solution (Sec.~\ref{sec:SSM}).
Finally, we show examples of approximating arbitrary kernels through linear combinations of \HM kernels (Sec.~\ref{sec:optimization}), demonstrate how low-order \HM kernels extrapolate well on the Mauna Loa CO\textsubscript{2} data set, and illustrate the scalability of our approach in speed comparisons against state of the art methods (Sec.~\ref{sec:experiments}).

\section{Background: Canonical representation of stationary Gaussian processes}\label{sec:background}
\subsection{Two senses of Markovian GP}
Let $f(t)$ be a GP indexed over time, that is, for any finite time indices $(t_1, \ldots, t_n)$, the joint distribution of $(f(t_1), \ldots, f(t_n))$ is normal~\citep{Rasmussen2005-mq}.
The covariance function, $k(t, s) = \cov(f(t),f(s))$, and the mean function, $\Expect[f(t)]$, of a GP fully specifies its probabilistic structure.
An alternative constructive formulation of GPs by P. Levy led to the development of a \textit{canonical} representation for GPs as stochastic integrals with respect to a Brownian motion, from which new Markov properties were formulated.

\subsubsection{Markov in the restricted sense}
In the following, we consider univariate centered stationary GPs which are fully characterized by a translation invariant covariance kernel $k(t, s) = k(\tau)$ where $\tau = \abs{t - s}$.
One of the simplest examples of a Markovian GP is the Ornstein-Uhlenbeck (OU) process, with kernel $k(\tau) = \sigma^2 \exp(-\mu\tau)$~\citep{stein_interpolation_textbook}.
As we will see, it is \textit{how} one defines a Markov property for GPs that allows for greater insight into their behavior.
In this case, the OU process possesses the simplest Markov property in that,
$p(f(t) \mid \bm{\sigma}(f(s)); s < t) = p(f(t) \mid f(s))$, where $\bm{\sigma}(f(s))$\footnote{rigorously, $\bm{\sigma}(f(s))$ is the filtration of the process up until time $s$.} represents all information known about the process up to time $s$. The fact the OU process is Markov is easily identified by writing down its corresponding SDE and associated solution:
\begin{align}
	df(t) &= -\mu f(t) \, dt + \sigma d\WienerProcess(t)\\
	f(t) &= \exp(-\mu(t-s))f(s) + \sigma \int_s^t \exp(-\mu(t-\tau)) d\WienerProcess(\tau)
\end{align}
where $\WienerProcess(u)$ is the Wiener process (Brownian motion) ~\citep{hida_gp_book,oksendal_sde_intro_text,sarkka_applied_sdes_text}.
Note that the solution, which is a Gaussian process, depends only on the most recent known value $f(s)$ and not on the farther history $f(u), \, \forall u < s$~\citep{Jazwinski2007-yx}.
Now, it is easy to determine that the conditional distribution, $f(t) \mid f(s)$,
\begin{align}\label{eq:OU}
    p(f(t) \given f(s)) &= \mathcal{N}(
        f(t) \given
        \underbrace{{\color{retroblue1}k(t-s)k(0)^{-1}} f(s)}_{\substack{\text{mean}}},\,\,
        \underbrace{k(0) - k(t-s)^2 k(0)^{-1}}_{\substack{\text{variance}}}
        )
\end{align}

In this pedagogical example, we can view the kernel not only as a covariance function, but also as the \textcolor{retroblue1}{\textbf{\textit{operator}}} which propagates the process forward in time.
That said, the OU process has undesirable limitations -- it is mean square differentiable nowhere, thus its sample functions are very rough~\citep{Jazwinski2007-yx,stein_interpolation_textbook,levy_canonical_gps}.
This leads us to P. Levy's definition of $N$-ple Markov in the \textit{restricted sense}, extending the Markov property to GPs of higher order differentiability~\citep{,levy_wiener_random_functions_1951,hida_gp_book}.

\begin{definition}[$N$-ple Markov in the restricted sense]\label{def::n_ple_restricted}
	A GP, $f(t)$, is called $N$-ple Markov in the restricted sense if it is exactly $N-1$ times differentiable in mean square and
	\begin{equation}\label{eq::main_paper::restricted_markov}
	    p(f(t) \given \bm{\sigma}(f(s)); s \leq t) = p(f(t) \given f(s), f^{(1)}(s), \ldots, f^{(N-1)}(s))
	\end{equation}
	where $f^{(i)}$ denotes the $i^{\text{th}}$ mean square derivative of $f$.
\end{definition}

This definition has immediate consequences in reasoning about finitely differentiable GPs which we will show through a motivating example.
First, note that a GP and all of its mean square derivatives are jointly Gaussian~\citep{Jazwinski2007-yx,Sarkka2011-do} and define $k^{(p)}(\tau) \triangleq \frac{\partial^p}{\partial \tau^p} k(\tau)$ so that for a stationary GP, the covariance functions of the corresponding mean square derivative GP are
\begin{align}
	\cov(f^{(p)} (t), f^{(q)}(s)) = (-1)^{q} k^{(p+q)}(\tau) \quad\quad \text{with }\, \tau = \lvert t - s \rvert.
\end{align}

Now, consider a GP, $f(t)$, with the \Matern $\tfrac{3}{2}$ kernel (see Table~\ref{tbl:kernels}) parameterized by unit variance and length-scale, $k(\tau) = (1+\sqrt{3}\tau)\exp(-\sqrt{3}\tau)$~\citep{Rasmussen2005-mq}.
Since $k(\cdot)$ is once differentiable, this GP has only one mean square derivative, $f^{(1)}(t)$, so that $p(f(t) \given f(s), f^{(1)}(s); s < t) = p(f(t) \given f(s), f^{(1)}(s))$, making it a $2$-ple Markov GP in the restricted sense.
By the joint Gaussianity, the conditional density is
\begin{align}\label{eq::main_paper::matern_evolution_1d}
    p(f(t) \mid f(s), f^{(1)}(s))
	&= \mathcal{N}(f(t) \mid
	{\color{retroblue1}
	    \left[
		k(t-s), -k^{(1)}(t-s)
	    \right]
	    \Ks(0)^{-1}
	}
	\begin{bmatrix} f(s) \\ f^{(1)}(s)\end{bmatrix},\, q(t - s) )
\end{align}
where $q(t-s) = k(0) - \begin{bmatrix} k(t-s) ,& -k^{(1)}(t-s) \end{bmatrix} \Ks(0)^{-1}[ k(t-s), k^{(1)}(t-s)]^\top$ and
$\left[\Ks(0) \right] _{ij}= (-1)^j k^{(i+j)}(\tau)\rvert_{\tau=0}$.
Note how linear combinations of $k(t-s)$ and $k^{(1)}(t-s)$ fully describe how the process, $f(t)$, evolves over time.

We can generalize this to arbitrary N-ple Markov GPs in the restricted sense \textit{and} their mean square derivatives to understand how they evolve together over time.
Define a vector representation $\fs(t) \triangleq\modifiedsubsuper\begin{bmatrix} f(t), f^{(1)}(t), \ldots, f^{(N-1)}(t) \end{bmatrix}\overset{\lower.5em\hbox{$\top$}}{}$,
which, like the OU process earlier, is not differentiable (because of non-differentiable $f^{(N-1)}$).
The vector process $\fs(t)$ is a $1$-ple GP.
Now, $p(\fs(t) \mid \fs(t))$ can be determined so that
\begin{align}\label{eq::main_paper::matern_evolution}
	p(\fs(t) \mid \fs(s)) &= \mathcal{N}(\fs(t) \mid {\color{retroblue1}\Ks(t-s)\Ks(0)^{-1}}\fs(s), \vQ(t-s))\\
	\vQ(t-s) &= \Ks(0) - \Ks(t-s) \Ks(0)^{-1} \Ks(t-s)^\top
\end{align}
where $\left[\Ks(\tau) \right] _{ij}= (-1)^j k^{(i+j)}(\tau)$.
While Eq.~\eqref{eq::main_paper::matern_evolution} could be used to recursively determine the trajectory of $f(t)$, Eq.~\eqref{eq::main_paper::matern_evolution_1d} could not; retaining the most recent information about the process in conjunction with its mean square derivatives is key to inferring its future behavior.

\subsubsection{Markov in the Hida sense}
Consider a GP, $f(t) = f_1(t) + f_2(t)$, where $f_1(t)$ and $f_2(t)$ are both GPs with differently parameterized \Matern $\tfrac{3}{2}$ kernels.
Then, $f(t)$ is only once differentiable, however, it is not a $2$-ple GP in the restricted sense because Def.~\ref{def::n_ple_restricted} is not satisfied.
With Levy's definition being insufficient, we introduce T.~Hida's more general description of a GP Markov property, which we refer to as $N$-ple Markov (in the Hida sense).
\begin{definition}[$N$-ple Markov in the Hida sense~\citep{hida_gp_book}]\label{def::n_ple_hida}
	A GP, $f(t)$, is called an $N$-ple Markov GP if it admits the following filtered white noise representation:
	\begin{align}
		\label{eq::main_paper::gp_goursat_repr}
		f(t) &= \int_0^t F(t - u) d\WienerProcess(u)\\
		\label{eq::main_paper::gp_goursat_kernel_decomp}
		F(t - u) &= \sum_{i=1}^N g_i(t) h_i(u) \quad u \leq t \quad g_i, h_i: \, \reals \rightarrow \complex
	\end{align}
\end{definition}
$F(t-u)$ is called the ``\emph{canonical kernel}'' of the process in the literature, however, it should not be confused with a positive-semi-definite (covariance) kernel.
For clarity's sake, we will henceforth refer to $F(t-u)$ as the \textit{canonical filter} of the process.
Substitution of Eq.~\eqref{eq::main_paper::gp_goursat_kernel_decomp} into Eq.~\eqref{eq::main_paper::gp_goursat_repr} shows $f(t)$ is constructed as a linear combination of $N$ additive random processes by writing,
\begin{align}
    \nonumber
    f(t) &\triangleq \sum_{i=1}^N f_i(t) = \sum_{i=1}^N g_i(t) U_i(t)\\
    \nonumber
    U_i(t) &= \int^t h_i(u) d\WienerProcess(u).
\end{align}
Hence, the process $f(t)$ would be described as $N$-ple Markov in the Hida sense.  We see then that if a process is $N$-ple Markov (in the Hida sense) it may not be $N$-ple Markov in the restricted sense as Def.~\ref{def::n_ple_hida} does not require differentiability of the process.  Both of these definitions will be the groundwork for constructing appropriate state-space models that can be used for GP inference.

\subsection{The complete canonical basis}
A remarkable fact is that the canonical filter of \emph{any} stationary $N$-ple Markov GP lies in the span of a known set of basis functions.  The form of those basis functions as derived in~\citet{hida_gp_book} is given in the following theorem.
\begin{theorem}[Canonical Kernels (\citeauthor{hida_gp_book},~\citeyear{hida_gp_book}, p.~102)]\label{thm::main_paper::hida_canonical_basis}
	If $f(t)$ is a stationary $N$-ple Markov GP, then its canonical filter, $F(t-u)$, can be represented by a linear combination of the basis
\begin{align}
    F(t-u) &= \sum_{k=1}^m c_k (t-u)^{p_k} e^{-\mu_k(t-u)}
\end{align}
where $\mu_k \in \complex$ with $\text{Re}(\mu_k) > 0$, $p_k \in \naturalNumbers$, $\sum_{k=1}^m p_k = N$, and $c_k \in \complex$.
\end{theorem}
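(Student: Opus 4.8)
The plan is to use the two defining features of the canonical filter at once: its \emph{separability} (the rank-$N$ Goursat decomposition of Def.~\ref{def::n_ple_hida}) together with the \emph{stationarity} of $f$, which forces $F$ to depend only on $\tau = t-u$. Neither property alone determines $F$, but their interaction pins down the exponential-polynomial form. First I would recast the separable structure as a statement about translates of $F$. Since $F(t-u) = \sum_{i=1}^N g_i(t) h_i(u)$, for each fixed $u$ the shifted function $F(\cdot - u)$, viewed as a function of $t$, lies in the single $N$-dimensional space $\text{span}\{g_1,\dots,g_N\}$. Hence the translation span $V := \text{span}\{F(\cdot - u)\}$ has $\dim V \le N$, with equality for the minimal (canonical) decomposition indexed by $N$.

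The crucial observation is that $V$ is invariant under translation, so (granting enough regularity on $F$, discussed below) it is invariant under the generator of translations $D = d/d\tau$: each difference quotient $\tfrac{1}{h}\bigl(F(\cdot) - F(\cdot - h)\bigr)$ lies in the finite-dimensional, hence closed, space $V$, and its limit $F'$ therefore stays in $V$. Consequently $D$ restricts to a linear operator on the $N$-dimensional space $V$, so $F$ is annihilated by the characteristic polynomial $p$ of $D|_V$, giving a constant-coefficient ODE $p(D)F = 0$ on $\tau > 0$ with $\deg p = \dim V = N$. Solving this ODE is then routine: its general solution is an exponential polynomial $\sum_k P_k(\tau)\,e^{\lambda_k \tau}$, where the $\lambda_k$ are the roots of $p$ and $\deg P_k$ is one less than the multiplicity of $\lambda_k$. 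Collecting monomials and setting $\mu_k = -\lambda_k$ yields exactly the claimed basis $\tau^{p_k} e^{-\mu_k \tau}$, with the total multiplicity summing to $\deg p = N$.

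The real-part constraint $\text{Re}(\mu_k) > 0$ comes from a separate integrability argument rather than the ODE. For $f(t) = \int^t F(t-u)\,d\WienerProcess(u)$ to define a genuine stationary process of finite variance, the It\^o isometry requires $\int_0^\infty \lvert F(\tau)\rvert^2\,d\tau < \infty$; any term with $\text{Re}(\mu_k) \le 0$ fails to decay and destroys this finiteness, so every surviving exponent must satisfy $\text{Re}(\mu_k) > 0$. This is where stationarity is used a second time, now to prune the solution space of the ODE.

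The main obstacle I anticipate is \emph{regularity}. The Goursat decomposition supplies no a priori smoothness of $F$, $g_i$, or $h_i$ — they are merely measurable — yet the step ``translation-invariant finite span $\Rightarrow$ invariant under $D$ $\Rightarrow$ annihilated by $p(D)$'' needs $F$ to be differentiable. I would secure this by bootstrapping: choosing $u_1,\dots,u_N$ so that $\{F(\cdot - u_j)\}$ is a basis of $V$, every translate can be written as $F(\tau - u) = \sum_j c_j(u)\,F(\tau - u_j)$ with uniquely determined coefficient functions $c_j$; regularity of $F$ then follows from regularity of these finitely many coefficients via the functional equation, upgrading measurability to smoothness. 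The remaining points — verifying that $N$ is the minimal order (so the ODE genuinely has degree $N$, matching the $N$-ple designation) and tracking the behavior at $\tau = 0^+$ — are bookkeeping, but the substantive content lies entirely in the finite-span $\Rightarrow$ exponential-polynomial step.
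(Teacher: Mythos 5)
The paper does not actually prove this theorem: it is imported from Hida and Hitsuda (p.~102) and used as a black box, so there is no in-paper argument to compare against. Your proposal is, however, essentially the classical proof of that result, and it is sound in outline: stationarity turns the rank-$N$ Goursat decomposition $F(t-u)=\sum_i g_i(t)h_i(u)$ into the statement that all forward translates of $F$ lie in one $N$-dimensional space, a function on the half-line with finite-dimensional translation span is annihilated by a constant-coefficient differential operator and is therefore an exponential polynomial, and square-integrability of $F$ on $[0,\infty)$ (forced by the It\^o isometry and finite variance) prunes the roots to $\mathrm{Re}(\mu_k)>0$. Two points deserve slightly more care than you give them. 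First, the translates $F(\cdot-u)$ live on different half-lines $[u,\infty)$, so the translation span should be built from forward shifts on a common domain; the clean identity is obtained by setting $t=s$, $u=-\tau$ in the Goursat decomposition, giving $F(\tau+s)=\sum_i g_i(s)\,h_i(-\tau)$ for all $s,\tau\ge 0$, which places every forward translate $F(\cdot+s)$ in $\mathrm{span}\{h_1(-\cdot),\dots,h_N(-\cdot)\}$. Second, the step from finite translation span to the ODE is genuinely false for arbitrary functions (a discontinuous additive $\ell$ makes $e^{\ell(\tau)}$ span a one-dimensional translation-invariant space), so your regularity bootstrap must start from the measurability and local square-integrability that the stochastic-integral representation already guarantees; with that hypothesis made explicit, the upgrade to smoothness via the coefficient functions $c_j(u)$ is the standard and correct route. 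Your multiplicity accounting (total multiplicity of the characteristic roots equal to $N$, with each root of multiplicity $p_k$ contributing the monomials $\tau^{j}e^{-\mu_k\tau}$, $0\le j<p_k$) is also the internally consistent reading of the theorem's condition $\sum_k p_k=N$, which as literally printed lists only the top-degree monomial per root.
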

As an example, take $m=1$, $p_1=1$, $\mu_1 = \sqrt{3}$, and $c_1=12\sqrt{3}$,  then $F(t-u) = 12\sqrt{3} (t-u)\exp(-\sqrt{3}(t-u))$. By plugging this canonical filter into the equation above, we find that the stationary covariance of this process equals the \Matern $\tfrac{3}{2}$ kernel with unit length-scale and variance.
This invites the obvious question: what are the equivalent set of basis functions that span the space of admissible covariance functions over stationary, finitely differentiable GPs?
\begin{figure}[H]
	\centering
	\includegraphics[width=1.0\textwidth]{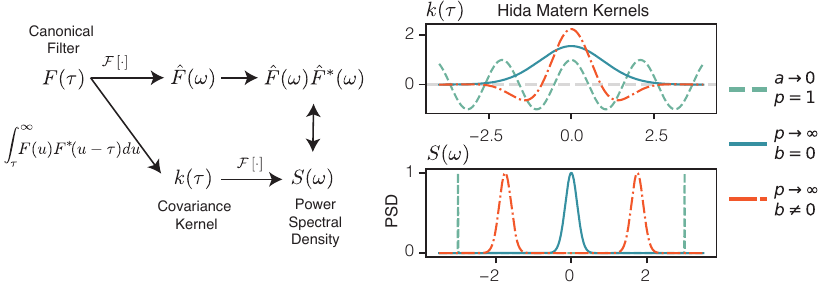}
	\caption{
	\textbf{Left}: Commutative diagram showing how the canonical filter is related to the covariance function and PSD.
	\textbf{Right}:  Kernels whom are limit points of \HM kernels.
	The \textcolor{retroblue2}{\textbf{cosine}} (spectral delta) kernel, \textcolor{retroblue1}{\textbf{squared exponential}} kernel, and \textcolor{retropale1}{\textbf{Gabor}} kernel and their PSD as limits of \HM kernels.
	}
	\label{fig:commutative}
\end{figure}
\section{The \HM Kernel}\label{sec:HM}
Leveraging the basis over canonical filters describing stationary and Markovian GPs, we can determine a corresponding basis over admissible covariance functions.  Let $f_p(t)$ be a GP with canonical filter given by a single basis as described in Thm.~\ref{thm::main_paper::hida_canonical_basis}, i.e. $F_p(\tau = t-u) = \tau^p e^{-\mu\tau}$, then $f_p(t)$ has power spectral density (PSD), $S_p(\omega)$, given by $S_p(\omega) = \Fhat_p(\omega)\Fhat_p(\omega)^\ast$ where $\Fhat_p(\omega)$ is the Fourier transform of $F_p(\tau)$.  In the case that $\mu$ is complex, then $S_p(\omega)$ would \textit{not} be the PSD of a real-valued GP as it would not be purely real and symmetric.

Keeping this in mind, we can determine a basis over real-valued covariance functions by isolating the real and symmetric parts of $S_p(\omega)$, giving us $S(\omega)$. Then, by invoking Bochner's theorem we can arrive at $k(\tau)$ through taking the inverse Fourier transform of $S(\omega)$~\citep{Rasmussen2005-mq}. The result is stated below and the full derivation is in Appendix~\ref{app::can_filter_deriv}.
\begin{proposition}
	The real-valued covariance function, and PSD corresponding to a canonical basis, $F_p(\tau) = \tau^p e^{-\mu \tau}$, with $\mu = a + jb$, are
    \begin{align}
        \label{eq::hida_kernel::kernel}
        k_{H,p}(\tau; a, b) &= \cos(b\tau) \, k_{\text{Mat}}\left(\tau;\, l=2\frac{\sqrt{p}}{a}, \nu=p+\tfrac{1}{2}\right)\\
        \label{eq::hida_kernel::psd}
        S_{H,p}(\omega; a, b) &= (p!)^2 \left[\left(\frac{1}{(\omega-b)^2 + a^2}\right)^{p+1} + \left(\frac{1}{(\omega+b)^2 + a^2}\right)^{p+1}\right]
        \end{align}
\end{proposition}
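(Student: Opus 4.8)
The plan is to follow the Fourier-analytic route indicated above: transform the single canonical filter $F_p(\tau)=\tau^p e^{-\mu\tau}$, take its squared magnitude to form the spectral density, symmetrize so as to land in the space of real-valued stationary processes, and invert. First I would compute $\Fhat_p(\omega)$, treating $F_p$ as causal (supported on $\tau\ge 0$, as in Eq.~\eqref{eq::main_paper::gp_goursat_repr}). Since $\mathrm{Re}(\mu)=a>0$, the Laplace-type integral $\int_0^\infty \tau^p e^{-s\tau}\,d\tau = p!/s^{p+1}$ converges at $s=\mu+j\omega$, giving
\begin{equation}
\Fhat_p(\omega) = \frac{p!}{(\mu+j\omega)^{p+1}} = \frac{p!}{\bigl(a+j(\omega+b)\bigr)^{p+1}}.
\end{equation}
Multiplying by the conjugate collapses the complex factors into
\begin{equation}
S_p(\omega) = \Fhat_p(\omega)\Fhat_p(\omega)^\ast = \frac{(p!)^2}{\bigl(a^2+(\omega+b)^2\bigr)^{p+1}},
\end{equation}
which is real and non-negative but centered at $\omega=-b$ and hence not even; this is the sense in which it is not yet the PSD of a real-valued GP.

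The symmetrization step adds the frequency-reflected copy. Setting $S_{H,p}(\omega)=S_p(\omega)+S_p(-\omega)$ reproduces Eq.~\eqref{eq::hida_kernel::psd} exactly, and this function is real, non-negative, and even, so by Bochner's theorem it is a legitimate PSD and its inverse Fourier transform is a valid covariance. To invert, I would regard $S_{H,p}$ as two shifted copies of the zero-centered base density $S_0(\omega)=(p!)^2/(a^2+\omega^2)^{p+1}$, namely $S_{H,p}(\omega)=S_0(\omega+b)+S_0(\omega-b)$. By the modulation theorem the inverse transform of $S_0(\omega\mp b)$ is $e^{\pm jb\tau}g(\tau)$ with $g=\mathcal{F}^{-1}[S_0]$, so summing the two shifts yields $(e^{jb\tau}+e^{-jb\tau})g(\tau)=2\cos(b\tau)\,g(\tau)$. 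This already produces the $\cos(b\tau)$ prefactor of Eq.~\eqref{eq::hida_kernel::kernel}.

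It then remains to identify the base kernel $g(\tau)$ as a \Matern kernel. The half-integer \Matern of order $\nu=p+\tfrac12$ has spectral density proportional to $(\lambda^2+\omega^2)^{-(\nu+1/2)}=(\lambda^2+\omega^2)^{-(p+1)}$, so $S_0$ is proportional to a \Matern PSD of smoothness $\nu=p+\tfrac12$; matching the spectral bandwidth $a$ to the kernel's length-scale convention then fixes $l$ as a multiple of $1/a$ (stated as $2\sqrt{p}/a$), with the constants $(p!)^2$ and the factor $2$ absorbed into the variance. The transform $g(\tau)$ itself can be obtained for general $p$ by repeatedly differentiating the basic pair $\mathcal{F}^{-1}[(a^2+\omega^2)^{-1}]=\tfrac{1}{2a}e^{-a|\tau|}$ in the parameter $a$, or from a standard table, recovering the polynomial-times-exponential form of the half-integer \Matern.

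I expect the main obstacle to be the constant-tracking in this last identification rather than any conceptual hurdle: carrying every normalization through the squared magnitude, the symmetrization, and the inverse transform so that the base density lands on the paper's exact \Matern parameterization and yields precisely $l=2\sqrt{p}/a$ and $\nu=p+\tfrac12$. A secondary point needing care is justifying the symmetrization itself, i.e.\ arguing that summing $S_p$ with its reflection is the spectral signature of the real-valued process whose complexification carries the filter $F_p$, so that the resulting $k_{H,p}$ is genuinely real and even rather than merely a formal manipulation.
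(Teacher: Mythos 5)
Your proposal is correct and follows essentially the same Fourier-analytic route as the paper's appendix derivation (transform the causal filter, square, symmetrize, invert via Bochner), but the mechanics of the first step differ enough to be worth noting. The paper computes the Fourier cosine and sine transforms of $\tau^p e^{-\mu\tau}$ separately from tabulated binomial-sum formulas and recombines them through the identity $2\left(\norm{F_{C,p}(\omega)}^2+\norm{F_{S,p}(\omega)}^2\right)=\abs{F_{C,p}+jF_{S,p}}^2+\abs{F_{C,p}-jF_{S,p}}^2$ --- which is precisely your $S_p(\omega)+S_p(-\omega)$ in disguise, since $F_{C,p}\mp jF_{S,p}=\Fhat_p(\pm\omega)$ --- followed by a page of partial-fraction simplification. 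Your one-line evaluation $\Fhat_p(\omega)=p!/(\mu+j\omega)^{p+1}=p!/\bigl(a+j(\omega+b)\bigr)^{p+1}$ reaches the shifted-\Matern form of the PSD immediately and makes both the symmetrization and the $\cos(b\tau)$ modulation transparent; this is a genuinely cleaner execution of the same idea. On the constant-tracking you flag as the residual obstacle: the paper also leaves this implicit (it ends by asserting the inverse Fourier transform ``is easily found'' after substituting $\zeta_{1,2}=\omega\pm b$), and in fact matching $a^2$ to the \Matern spectral bandwidth $2\nu/l^2$ with $\nu=p+\tfrac{1}{2}$ gives $l=\sqrt{2p+1}/a$, which is consistent with the paper's own \Matern $\tfrac{3}{2}$ example ($p=1$, $a=\sqrt{3}$, unit length-scale) but not with the stated $l=2\sqrt{p}/a$; the latter appears to be an error in the proposition rather than a constant your derivation should be expected to reproduce.
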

where $k_{\text{Mat}}(\tau; \, l, \nu)$ is the general \Matern covariance kernel of order $\nu$ and length-scale $l$.  Although simple, these kernels, which we call \textit{\HM} kernels, span the space of  stationary and finitely differentiable GP covariance functions.
Similar to a standard \Matern kernel the parameter $p$ controls the differentiability/smoothness, $a$ is the inverse length-scale, and $b$ controls the center of the PSD.  To conceptualize the Markov property better take $p=N$, and $b \neq 0$, then a GP with covariance function $k_{H,N}(\tau;\, a, b)$ will be $2N$-ple Markov in the Hida sense; however, when $b = 0$, then this covariance function coincides exactly with the \Matern kernels and such a GP would be $N$-ple Markov in the restricted sense.
The functional form of \eqref{eq::hida_kernel::kernel} has appeared in the literature before and we discuss this in Section~\ref{sec:history}.

There may be some confusion in the previous statement with regards to $f(t)$ being an $N$-ple GP when $b=0$ but $2N$-ple when $b \neq 0$ that an example may help clarify.
Take $f(t)$ to be a GP with canonical filter $F(\tau) = F_1(\tau) + F_2(\tau)$ and $F_1(\tau) = \tau \exp(-(a+jb)\tau)$, $F_2(\tau) = \tau \exp(-(a-jb)\tau)$.  Clearly, $F_1(\tau)$ is conjugate to $F_2(\tau)$ and individually each would be the canonical filter of a $2$-ple GP.  If we had that $b=0$ then both filters are identical and $F(\tau)$ is simply $2\tau\exp(-a\tau)$ making $f(t)$ a $2$-ple GP in the restricted sense however, summing them when $b \neq 0$ results in a GP that is $4$-ple in the Hida sense.

We can gain more expressive power from the \HM kernel by considering their linear combinations.  Though many strictly concern themselves with linear combinations such that the coefficients are always positive, it can be somewhat restrictive.  As long as the coefficients are chosen such that the resulting kernel is positive semidefinite, then negative coefficients are permitted~\citep{posa_difference_of_covariance_functions}.
Thus, the following theorem gives us good faith that we can achieve a respectable approximation of kernels that produce GPs infinitely differentiable in mean square using only a finite linear combination of \HM kernels parameterized by the same value of $p$ but varying inverse length-scales, $a$, and frequency parameters, $b$. %
\begin{theorem}[Mixture of stationary \HM kernels are dense.]
	\label{thm::main_paper::hida_materns_universal}
	For any fixed $p$, \HM kernels are dense in the space of square integrable functions, hence they are dense with respect to $\mathcal{L}_2$ convergence.
\end{theorem}

While those GPs that are infinitely differentiable in mean square have kernels that can not be approximated \textit{exactly} with a finite linear combination of \HM kernels, those that are finitely differentiable can.  GPs with covariance functions such as the squared exponential, spectral mixture, and cosine kernel fall into the class of infinitely differentiable GPs and in some literature are referred to as ``completely deterministic''. %
\begin{remark}
    Certain kernels such as the squared exponential do not fall within the class of GPs which may be represented by finite dimensional SDEs as they have a countably infinite number of derivatives and are analytic.  Such GPs are regarded as ``completely deterministic''~\citep{levy_canonical_gps}---meaning if observed for an infitessimal amount of time their future behavior is in theory completely known~\citep{stein_interpolation_textbook}.
\end{remark}

\subsection{The family of \HM GPs -- \HM Mixture (MHM) kernels}
With the \HM kernel defined, we can now consider the family formed by their linear combinations.
We define a \emph{mixture of \HM (MHM) kernels} as,
\begin{align}
	k_{H,N,\vp, \vc}(\tau) &= \sum_{i=1}^L c_i \, k_{H, p_i}(\tau; a_i, b_i)\\
	\vp &= \begin{pmatrix} p_1 & \cdots & p_L \end{pmatrix}^\top \in \reals^M, \quad \sum_{i=1}^L p_i = N,\\
	\vc &= \begin{pmatrix} c_1 & \cdots & c_L \end{pmatrix}^\top \in \reals^L, \quad c_i \neq 0, \,\,\, \forall i
\end{align}
where $\vp$ specifies the mixands smoothness, $\vc$ their respective weights, and $N$ the order of the Markov property in the Hida sense.  Note how based on the value of each $b_i$, if $N = \sum p_i$, then the process can be anywhere from $N$-ple Markov to $2N$-ple Markov in the Hida sense. Since \HM kernels with $p$ fixed form a universal class, the \HM mixture kernels do as well.

\section{\HM State Space Representations}\label{sec:SSM}
The intuition regarding Markov GPs that we developed earlier will help now in constructing a state-space representation of GPs that can be described by the \HM class of kernels~\citep{Jazwinski2007-yx}.  Though the state-space representation of GPs has been used successfully in the literature (see Sec.~\ref{sec:history}), construction of an appropriate SSM usually begins by correctly parameterizing a linear SDE whose solution has a stationary distribution coinciding  with the GP of interest~\citep{sarkka_hartikainen_ssms_2010,solin_infinite_horizon_gp,solin_thesis_2016}.  In contrast, the construction we present leverages the $N$-ple Markov property and allows for a state-space representation that only depends on the kernel and its derivatives.  To start, consider GPs that are $N$-ple Markov in the restricted sense, which for now, limits our scope to \HM kernels of order $N$ with $b=0$ (or \Matern kernels of order $\nu=p+\tfrac{1}{2}$ for $p \in \naturalNumbers$).
In the course of doing so, we will gradually expand upon this construction to handle the full family of \HM kernels.

\subsection{SSMs for \texorpdfstring{$\vN$}{N}-ple GPs in the restricted sense}\label{section::SSMs_restricted_Nple}
 Let $f(t)$ be an $N-1$ times differentiable, and stationary GP with kernel $k_{H,N}(\tau;\, a, 0)$, then $f(t)$ is an $N$-ple Markov GP.  By consolidating $f(t)$ and its mean square derivatives into the vector process
 \begin{equation}
 	  \fs(t) = \begin{pmatrix}f(t) & f^{(1)}(t) & \cdots & f^{(N-1)}(t)\end{pmatrix}^\top \in \reals^N
 \end{equation}
we have that $\fs(t)$ is a $1$-ple Markov GP in the restricted sense so that $p(\fs(t) \rvert \fs(s); s < t) = p(\fs(t) \rvert \fs(s))$.  Recognizing $\fs(t)$ is a multioutput GP  described by the kernel $\Ks(\tau)$, with $\left[\Ks(\tau) \right] _{ij}= (-1)^j k^{(i+j)}(\tau)$, we can write the conditional distribution explicitly:
\begin{align}
	\label{eq::main_paper::restricted_conditional_density}
	p(\fs(t) \rvert \fs(s)) &\sim \mathcal{N}(\fs(t) \rvert \A(\tau) \fs(s), \Q(\tau))\\
	\A(\tau) &= \Ks(\tau) \Ks(0)^{-1} \fs(s)\\
	\Q(\tau) &= \Ks(0) - \Ks(\tau) \Ks(0)^{-1} \Ks(\tau)^\top
\end{align}
where $\tau = |t-s|$ with $t > s$~\citep{alvarezComputationallyEfficientConvolved2011}. It is easy to see now that the conditional density in Eq.~\eqref{eq::main_paper::restricted_conditional_density} can be rewritten as a difference equation so that $\fs(t)$ is equivalently described by,
\begin{align}
	\label{eq::main_paper::difference_eq}
	\fs(t) &= \vA(\tau) \fs(s) + \boldsymbol{\epsilon}(\tau)\\
	\boldsymbol{\epsilon}(\tau) &\sim \mathcal{N}(\boldsymbol{\epsilon}(\tau) \rvert \boldsymbol{0}, \vQ(\tau))
\end{align}

The main object of interest $f(t)$ is easily extracted from $\fs(t)$ by letting $\vh = \begin{pmatrix} 1 & 0 & \cdots & 0\end{pmatrix}^\top$ so that $f(t) = \vh^\top \fs(t)$.  Thus, we can reason about the behavior of $f(t)$ through linear combinations of its past mean square derivatives or equivalently $\fs(t)$.  Much like our motivating examples earlier, $\Ks(\tau)$ alone characterizes how the process propagates forward through time. In order to motivate the practical utility of this construction we proceed by considering standard GP regression.

\subsection{GP Regression with finite SSMs}\label{section::main_paper::gp_regression_restricted}
In a GP regression setting with noisy observations, $\{y(t_i)\}_{i=1}^M$,  and stationary GP $f(t) \sim \mathcal{GP}(0, k(\tau))$ that is $N-1$ times differentiable, the standard generative model, as in~\citet{Rasmussen2005-mq}, can be recast as a linear Gaussian SSM using Eq.~\eqref{eq::main_paper::difference_eq} so that,
\begin{align}
	\fs(t_{i+1}) &= \vA(\tau_i) \fs(t_{i}) + \statenoise(\tau_i)\\
	y(t_i) &= \vh^\top \fs(t_i) + \nu
\end{align}
with $t_{i+1} > t_i \,\, \forall i$, $\tau_i = |t_{i+1} - t_i|$, and $\nu \sim \mathcal{N}(\nu \rvert 0, \sigma^2)$.  Since the aforementioned SSM is stable it admits a stationary covariance $\Pinf = \lim_{t\rightarrow \infty} \Expect[\fs(t)\fs(t)^\top]$~\citep{anderson_moore_state_space_book}. 
\begin{wrapfigure}[20]{r}{0.5\textwidth}
	\begin{center}
		\includegraphics[width=0.47\textwidth]{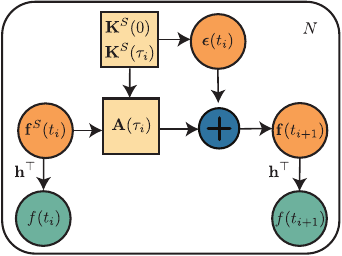}
	\end{center}
	\caption{Schematic representation depicting propagation of the process $\fs(t)$ forward in time.  The multioutput covariance kernel, $\Ks(\tau)$, completely determines the additive noise and propagates the mean value of the process.
	}
\end{wrapfigure}

Ensuring the process begins in the stationary state amounts to specifying $\fs(t_0) \sim \mathcal{N}(\vzero, \Pinf)$.  For SSMs, determining $\Pinf$ involves finding the solution of the continuous/discrete Lyapunov equation, however here $\Pinf = \Ks(0)$ (see Appendix~\ref{app::stationary_cov_hm_gp}).
In this form, the usual Kalman filtering and smoothing algorithms can be used to recover the posterior in $\mathcal{O}(MN^3)$ time, where $M$ is the number of data points.  

While the Kalman filtering algorithm is computationally efficient, ensuring that it is numerically stable can often be difficult.  As a result, ill numerical conditioning will be exacerbated by the fact that elements of $\Ks(\tau)$ are increasing in magnitude towards the bottom right, which we illustrate in Fig.~\ref{fig:condition_numbers}.   Approaches such as balancing or Nordsieck coordinate transformations, often used to combat this problem, formulate a surrogate SSM that can be used for equivalent inference~\citep{osborne_nordsieck_1966,  nordsieck_ode_transform_1962, sarkka_log_time_gp, kramer_stable_probabilistic_odes_2020}.  Similar in spirit to those approaches, we propose a \textit{correlation transform}, taking advantage of the SSMs formulation in terms of covariances.   Concretely, consider a linear transformation of the original process, $\zs(t) = \vC \fs(t)$ where
\begin{align*}
 [\vC]_{ii} = 1 \big/ \sqrt{\left[\Ks(0)\right]_{ii}}
\end{align*}
so that $\Ks_Z(\tau) \triangleq \text{cov}(\zs(t+\tau), \zs(t)^\top) = \vC \Ks(\tau) \vC^\top$. An alternative SSM is formed by substituting $\Ks_Z(\tau)$ for $\Ks(\tau)$ and rewriting the observation equation as $y(t_i) = \vh^\top \vC^{-1} \zs(t_i) + \nu$.  Whereas $\Ks(0)$ would be ill conditioned and thus cause problems when taking its inverse, $\Ks_Z(0)$ will have a significantly lower condition number, and we can expect the better numerical conditioning to provide more accurate inference.  After inference pertaining to $\zs(t)$ is made, properties of interest related to $\fs(t)$ are easily recovered.

\subsection{SSMs for general \texorpdfstring{$\vN$}{N}-ple GPs}\label{sec:SSM:Nple}
Having introduced how an SSM for a GP with \HM kernel of order $N$ when $b=0$ can be constructed, we are now in a position to consider the general case.  In order to make this jump, first observe that when $b \neq 0$ , a GP, $f(t)$, with kernel $k_{H,N}(\tau;\, a, b)$, will be a $2N$-ple process in the Hida sense but not the restricted sense.  We can see this most readily by breaking the cosine term into a sum of two complex exponentials so that,
\begin{align}
	\label{eq::main_paper::general_hida_complex_split}
	k_{H, N}(\tau; a, b) &= \tfrac{1}{2} e^{jb\tau} k_{H, N}(\tau; a, 0) + \tfrac{1}{2} e^{-jb\tau} k_{H, N}(\tau; a, 0)\\
	&= \text{Re}\{e^{jb\tau} k_{H, N}(\tau; a, 0) \}
\end{align}
which shows $k_{H,N}(\tau; a, b)$ is $N$ times differentiable, but the sum of complex conjugate \HM kernels of order $N$.  However, it is apparent that the representation given by Eq.~\eqref{eq::main_paper::general_hida_complex_split} is redundant due to the complex conjugacy.  Hence, we can consider a complex GP, $z(t)$, with kernel $k_z(\tau; a, b) = e^{jb\tau} k_{H, N}(\tau; a, 0)$, so that an equivalent description of $f(t)$ is given by the SSM,
\begin{align}
	\zs(t) &= \vA_z(\tau) \zs(s) + \boldsymbol{\epsilon}_z(\tau)\\
	f(t) &= \text{Re}\{\vh^\top \zs(t)\}\\
\boldsymbol{\epsilon}_z(\tau) &\sim \mathcal{N}(\boldsymbol{\epsilon}_z(\tau) \rvert \boldsymbol{0}, \vQ_z(\tau))
\end{align}
where
\begin{align}
	\A_z(\tau) &= \Ks_z(\tau) \Ks_z(0)^{-1}\\
	\Q_z(\tau) &= \Ks_z(0) - \Ks_z(\tau) \Ks_z(0)^{-1} \Ks_z(\tau)^\cctop
\end{align}
and similar to earlier, $\zs(t) = \begin{pmatrix}z(t) &  \cdots & z^{(N-1)}(t)\end{pmatrix}^\top$, and $[\Ks_z(\tau)]_{ij} = (-1)^{j} k_z^{(i+j)}(\tau)$.  This shows that we can reason about these particular $2N$-ple Markov GPs, in the Hida sense, with an $N$-dimensional state-space model by exploiting the complex-conjugate symmetries present.   Sans the necessity of having to work with complex numbers, this formulation lends itself to GP regression exactly as described in Section~\ref{section::main_paper::gp_regression_restricted}.
\subsection{SSMs for \HM mixtures}
Now that we have explored how to formulate the SSM describing a GP whose covariance function is an elementary \HM kernel, it is a trivial extension to consider the SSM formulation for the \HM mixture kernel, $k_{H,N,\vp, \vc}(\tau)$.  
\begin{align}
 \Ks_{H,N,\vp, \vc}(\tau) &=\text{diag}\begin{pmatrix}c_1\Ks_{H, p_1} (\tau) & c_2 \Ks_{H, p_2}(\tau) & \cdots & c_M \Ks_{H, p_M}(\tau)\end{pmatrix}
 \end{align}
so that $\Ks_{H,N,\vp, \vc}(\tau) \in \reals^{N\times N}$, and $\Ks_{H, p_i} (\tau)  \in \reals^{p_i \times p_i}$.  Thus, we can engineer  arbitrarily complex kernels as linear combinations of \HM kernels, yet work with them in the same manner by constructing an SSM with $\Ks_{H,N,\vp,\vc}(\tau)$.  For clarity sake later on if $f(t)$ is a GP with covariance function that is a mixture of \HM kernels we will say that $f(t)$ is an MHM (mixture of \HM kernels) GP denoted $f(t) \sim \MHM(f(t) \given \vp,\, \vc, \, N)$.

\section{\HM GPs and SDEs driven by Brownian motion}\label{sec:SDE}
It is well known that the solution of a linear SDE driven by Brownian motion is a Gauss-Markov process ~\citep{Jazwinski2007-yx,sarkka_applied_sdes_text,oksendal_sde_intro_text}.  In conjunction with our earlier discussion about Markovinity of univariate GPs, this observation raises the question, how does the state-space representation of a \HM GP relate to the solution of an SDE with the same stationary covariance. Delving into this question, we restrict ourselves to SDEs that can be written symbolically as
\begin{align}
	\mathcal{L} f(t) &= \frac{d}{dt}\WienerProcess(t)
\end{align}
where $\mathcal{L} = \sum a_i D^i$ is a differential operator of order $p$.  This formal representation can be transformed into an equivalent SDE much like in the works of ~\citep{sarkka_hartikainen_ssms_2010,sarkka_log_time_gp,solin_periodic_state_space} so that
\begin{align}\label{eq::main_paper::linear_sde}
	d\fs(t) = \vF \, \fs(t) \,dt + \vL \, d\WienerProcess(t)
\end{align}
with $\fs(t) \in \reals^N, \vL \in \reals^{N}$ and $\WienerProcess(t)$ a one dimensional Brownian motion. Taking $f(t)$ to be a \HM GP then we can consider its solution in terms of the preceding SDE and equivalent state-space representation so that we may develop stronger intuitions.  Writing the two representations side-by-side, for reasons that will become clear, we have
\begin{multicols}{2}
	\begin{equation*}
		\fs(t) = {\color{retroblue1}\bPhi(\tau)} \fs(s) + {\color{retropale1}\int_0^{\tau} \bPhi(\tau-u) \vL \, d\WienerProcess(u)}
	\end{equation*}\break
	\begin{equation*}
		\fs(t) = {\color{retroblue1}\Ks(\tau)\Ks(0)^{-1}}\fs(s) + {\color{retropale1}\epsilon(\tau)}
	\end{equation*}
\end{multicols}
where $\bPhi(\tau) = \exp(\vF \tau)$ is the state transition matrix of the system and $\tau = t-s$~\citep{Jazwinski2007-yx}.  Our first insight is that in order for both equations to be consistent $\color{retroblue1}\bPhi(\tau)$ and $\color{retroblue1}\Ks(\tau)\Ks(0)^{-1}$ must be equivalent so that $\Expect[\fs(t) \mid \fs(s)]$ is the same under either representation.   This tells us that we can find closed form solutions to the matrix exponential in terms of the derivatives of the \HM kernel.  

Moving forward it will be helpful to note that if $f(t)$ is a GP with kernel $k_{H,N}(\tau)$ then its SDE representation will have a dynamics matrix with one eigenvalue of multiplicity $N$. So, just to keep all of the ideas clear we have that a \HM GP with kernel $k_{H,N}(\tau)$ will be $N-1$ times differentiable in mean square, have a $N$ dimensional state-space representation, and its dynamics will have one eigenvalue of multiplicity $N$.  

Now, consider an arbitrary vector valued GP, $\gs(t)$, described by the SDE
\begin{align}\label{eq::main_paper::sde_multiplicity_n}
	d\gs(t) = \vG \gs(t)dt + \vM d\WienerProcess(t)
\end{align}
with $\vM \in \reals^N$, and the dynamics $\vG \in \reals^{N\times N}$ such that $\vG$ has one eigenvalue of multiplicity $N$.  If $f(t)$ is a \HM GP with kernel $k_{H,N}(\tau)$ then it in conjunction with its derivative processes satisfy a linear SDE $d\fs(t) = \vF \fs(t)dt + \vL d\WienerProcess(t)$.  In this case, $\vF$, like $\vG$, has one eigenvalue of multiplicity $N$, and $\vL = \begin{pmatrix} 0 & \cdots & 0 & \sqrt{k^{2N + 1}(\tau)}\rvert_{\tau=0}\end{pmatrix}$.  

The hyperparameters of $k_{H,N}(\tau)$ can be adjusted so that $\vF$ and $\vG$ have the same eigenvalue of multiplicity $N$ in which case they can be decomposed into their Jordan forms with $\vF = \vT^{-1} \vJ \vT$ and $\vG = \vS^{-1} \vJ \vS$.  We can form a surrogate process, $\vz^S(t)$, such that $\gs(t) = \vC \vz^S(t)$, where $\vC$ is chosen so that $\vS\vC = \vT$.  Now, $\vz^S(t)$ satisfies the SDE

\begin{align}
	\vz^S(t) &= \vC^{-1} \vG \vC \vz^S(t) + \vC^{-1} \vM d\WienerProcess(t)\\
	&= (\vS \vC)^{-1} \vJ (\vC \vS) \vz^S(t) + \vC^{-1} \vM d\WienerProcess(t)\\
	&= \vT^{-1} \vJ \vT \vz^S(t) + \vC^{-1}\vM d\WienerProcess(t)\\
	&= \vF \vz^S(t) + \vC^{-1}\vM d\WienerProcess(t)
\end{align}
which is equivalent to the SDE for the \HM GP $f(t)$ if $\vL = \vC^{-1}\vM$.  So far, this illustrates that $N$-dimensional SDEs with dynamics of multiplicity $N$ can be transformed by a change of coordinates to an SDE for a \HM GP and its derivative processes when $\vL = \vC^{-1} \vM$.

However, it is possible that $\vG$ has several eigenvalues of different multiplicity in which case we may wonder how to determine if that SDE can also be transformed into one that aligns with a \HM GP.  To fix the idea, take $\gs(t)$ as described in Eq.~\eqref{eq::main_paper::sde_multiplicity_n} and $\vG$ to have the Jordan decomposition $\vS^{-1} \vJ \vS$ with $\vJ = \vJ_{\lambda_1, m_1} \oplus \cdots \oplus \vJ_{\lambda_L, m_L}$.  We can construct a GP whose covariance function is a sum of \HM kernels parameterized so that its dynamics can be decomposed into $L$ Jordan blocks with the same multiplicity and eigenvalues as $\vG$.  Again, if $\vL = \vC^{-1}\vM$ with $\vC$ selected so that $\vS\vC = \vT$ then an appropriate coordinate change of $\gs(t)$ will produce an SDE that describes a \HM GP.  This gives us a necessary condition on a linear change of coordinates that transforms an arbitrary SDE into one with the interpretation that the vector process represents a \HM GP and its derivative processes given by the following Lemma.

\begin{lemma}
	\label{lemma::main_paper::sde_solutions_and_hida_materns}
	If given a linear and finite-dimensional SDE, written as
	\begin{align}\label{eq::appendix::linear_sde}
		d\gs(t) = \vG \, \gs(t) \,dt + \vM  \,d\WienerProcess(t)
	\end{align}
	where $\vM \in \reals^{N}$, $\mathcal{W}(t)$ a one dimensional Brownian motion process and dynamics $\vG \in \reals^{N\times N}$, with Jordan decomposition $\vG = \vS^{-1} \vJ \vS$, $\vJ = \vJ_{\lambda_1, m_1} \oplus \cdots \oplus \vJ_{\lambda_L, m_L}$ $\vM \in \reals^{N}$, then a coordinate change, $\vC$, mapping $\gs(t)$ to an SDE representing a \HM GP and its derivative processes exists if $\vL = \vC^{-1}\vM$ where $\vS\vC = \vT$.   Here, $\vT$ is taken so that the SDE for a \HM GP with $L$ mixands has dynamics matrix $\vF = \vT^{-1} \vJ \vT$.
\end{lemma}

In other words, a linear SDE with dynamics matrix $\vG$ having $L$ Jordan blocks may be equivalent to a linear transformation of the SSM formulation of a \HM GP with $L$ mixands.  Our second insight concerns the derivatives of a \HM covariance kernel.  When $f(t)$ is a \HM GP with kernel $k_{H,N}(\tau)$ we have the relation that $\Ks(\tau)\Ks(0)^{-1} = \exp(\vF\tau)$ for some $\vF\in\reals^{N\times N}$ whose eigenvalues lay in the left half plane.  By properties of the matrix exponential $\tfrac{d}{d\tau}\Ks(\tau)\Ks(0)^{-1} = \vF\exp(\vF\tau) = \vF \Ks(\tau)\Ks(0)^{-1}$, but we can use the fact that $\Ks(\tau)$ is a matrix of derivatives to understand what the form of $\vF$ is by inspection.  

First, note that taking the element wise derivative of $\Ks(\tau)$, for now not worrying about its last row, is equivalent to shifting its rows up by one position. Hence, $\vF$ must be a companion form matrix that essentially takes the element wise derivative of $\Ks(\tau)$ so that
\begin{align}
	\frac{d}{d\tau}\Ks(\tau) &= \vF \Ks(\tau)\\
	\vF &= \begin{bmatrix} 0 & 1 & 0 & \cdots & 0\\
		0 & 0 & 1 & \cdots & 0\\
		\vdots & \vdots & \vdots & \ddots & \vdots \\
		 & \rule[.5ex]{2.0em}{0.4pt} & \vc^\top & \rule[.5ex]{2.0em}{0.4pt} & \end{bmatrix}
\end{align}
What this reveals is that derivatives of $k_{H,N}(\tau)$ of order $N+1$ up to $2N+1$ are linear combinations of its first $N$ derivatives.  Since computing $\Ks(\tau)$ requires computing the $2N$ derivatives of $k_{H,N}(\tau)$ this offers another avenue for reducing computation as we only need to calculate the first $N$.
Let's work through a motivating example to make some of the ideas concrete.  Say we take the \Matern 3/2 kernel with unit variance and length-scale so that $k(\tau) = (1+\sqrt{3}\tau)\exp(-\sqrt{3}\tau)$.  Then, we have that
\begin{align}
	\Ks(\tau) = \begin{bmatrix} (1+\sqrt{3}\tau)\exp(-\sqrt{3}\tau) & -3\tau \exp(-\sqrt{3}\tau) \\ 3\tau \exp(-\sqrt{3}\tau) & 3(1-\sqrt{3}\tau)\exp(-\sqrt{3}\tau) \end{bmatrix}
\end{align}
Standard calculations yield
\begin{align}
	\left[\Ks(\tau)\right]^{-1} &= \begin{bmatrix} (1 - \sqrt{3}\tau) & \tau \\ -\tau & \tfrac{1}{3}(1+\sqrt{3}\tau)\end{bmatrix} \exp(2\sqrt{3}\tau)\\
	\Ks(\tau)^{(1)} &= \begin{bmatrix} 3\tau & 3(1-\sqrt{3}\tau)\\ -3(1-\sqrt{3}\tau) & 3(-2\sqrt{3} + 3\tau)\end{bmatrix} \exp(-\sqrt{3}\tau)
\end{align}
By the derivative property of the fundamental matrix solution, $\frac{d}{d\tau} \bPhi(\tau) = -\vF \bPhi(\tau)$, we can calculate the SDEs dynamics since $\bPhi(\tau) = \vA(\tau) = \Ks(\tau)\Ks(0)^{-1}$.  Plugging in reveals the dynamics
\begin{align}
	\vF &= -\bPhi(\tau)^{(1)} \bPhi(\tau)^{-1}\\
	&= -\Ks(\tau)^{(1)} \left[\Ks(\tau)\right]^{-1}\\
	&= \begin{bmatrix} 0 & 1 \\ 3 & 2\sqrt{3}\end{bmatrix}
\end{align}
which shows that $\vF$ is in companion form.  That fact will hold no matter the dimensionality of the SDE through recognizing that
\begin{align}
	\frac{d}{d\tau}\Ks(\tau) &= \vF \Ks(\tau)
\end{align}

which implies that higher order derivatives will be linear combinations of their lower order counterparts due to the companion form structure of $\vF$.  As a sanity check, one can consult the works ~\citet{solin_thesis_2016,sarkka_hartikainen_ssms_2010} to see that the same dynamics were found albeit through a much more algebraically demanding procedure.
\subsection{Multi-output \HM kernels}
In our discussion of SDEs we were more focused on the latent evolution of $f(t)$ and its derivatives, however, vector processes defined that way are valid multioutput GPs.  We will now consider those vector valued processes as a means to define a new class of multioutput GPs.

To consider multi-output GPs take a MHM GP, $f(t) \sim \MHM(f(t) \given \vp,\, \vc, \, N)$, and ignore the observation equation so that we are only concerned with $\fs(t)$.  As we said, $\fs(t)$ has an equivalent SDE formulation, with associated dynamics matrix $\vF$ having a particular Jordan block structure.  Taking $\gs(t) = \vX \fs(t)$, we have a new GP of the same dimension, and its SSM formulation becomes immediate as $\text{cov}(\gs(t+\tau), \gs(t)^\top) = \text{cov}(\vX \fs(t+\tau), \fs(t)^\top \vX^\top) = \vX \Ks(\tau)\vX^\top$ so that

\begin{align}
	\gs(t) &= \underbrace{\vX \Ks(\tau) \Ks(0)^{-1} \vX^{-1}}_{\substack{\vA_g(\tau)}} \gs(s) + \underbrace{\vX \boldsymbol{\epsilon}(\tau)}_{\substack{\boldsymbol{\epsilon}_g(\tau)}}
\end{align}
Now, we have that $\gs(t)$ is a multi-output MHM (MO-MHM) GP, i.e. $\gs(t): \reals \rightarrow \reals^d$.  Taking this one step farther, the observations can be modified so that rather than projecting the latent process onto one dimension via $\vh$, we project it to $\reals^D$ via $\vH\in\reals^{D\times d}$. The augmented SSM becomes
\begin{align}
	\gs(t) &= \vA_g(\tau)\gs(s) + \boldsymbol{\epsilon}_g(t)\\
	\vy(t) &= \vH \gs(t) + \boldsymbol{\nu}
\end{align}
Whereas before, we had latents $\fs(t)$ that represented the process and its mean square derivatives, the latents, $\gs(t)$, are their linear combinations as mapped by $\vX$.  We note that these linear transformations do not alter the Markov property of the process, they simply transform it to a different coordinate system.  From the discussion earlier, it is also clear that the dynamics matrix of the SDE describing $\gs(t)$ has the same Jordan block structure as that of $\fs(t)$ since the coordinate change only results in a new SSM whose dynamics are similar to those of $\fs(t)$ (where similar should be taken to mean the two are related through a similarity transform).

\section{Numerical and Algorithmic Properties}
In Section~\ref{section::main_paper::gp_regression_restricted} we noted that the structure of $\Ks_{H,p}(\tau)$ can lead to technical difficulties which make naive state-space inference infeasible.  The proposed correlation transform amends the ill numerical conditioning that manifests itself in a naive implementation and makes it possible to work with higher order \HM kernels.  There are additional precautions we may take that guarantee higher numerical stability as well as reductions in computational complexity.
\begin{wrapfigure}[15]{r}{0.5\textwidth}
	\begin{center}
		\includegraphics[width=0.47\textwidth]{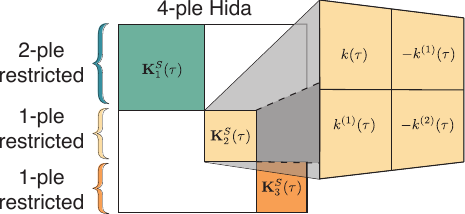}
	\end{center}
	\caption{Adding kernels that govern GPs Markov in the restricted sense result in a kernel over GPs that are Markov in the Hida sense. Forming appropriate SSMs amounts to considering the multioutput covariance formed from constituent blocks.}
\end{wrapfigure}

\subsection{Special structure of \texorpdfstring{$\Ks_{H,p}(\tau)$}{Ks(tau;H,p)}}
As the building blocks of $\Ks_{H,p}(\tau)$ are covariances between the process and its mean square derivatives, one may suspect that $\Ks_{H,p}(\tau)$ has some special structure that can be exploited.  In fact, many of the properties $\Ks_{H,p}(\tau)$ exhibits are similar to those of matrices explored in~\citet{strang_t_plus_ah}.

For any multioutput covariance matrix formed from a single \HM kernel we have that the $i^\text{th}$ off diagonal, up to appropriate sign flips, will only consist of the  $i^{\text{th}}$ derivative of the kernel evaluated at $\tau$. Hence, the multioutput covariance kernel is composed of only $2N-1$ unique elements, leading to memory requirements that scale linearly with the order of the kernel used.  This is especially useful when we consider that if we were working with observations not spaced uniformly then $\Ks(\tau)$, whose elements may require evaluation of complex equations, would need to be recomputed every time step.

Let's now consider the inversion of $\Ks_{H,N}(0)$.  If the correlation transform is properly used, then this inverse should not be a source of much trouble.  However, when $b=0$, the multioutput covariance kernel will be identically 0 at all indices $i,j$ such that $i+j$ is odd. To see this, note that the PSD of the \HM kernel is real and symmetric, and that $[\Ks_{H,N}(0)]_{ij}$ exactly coincides with the $(i+j)^\text{th}$ moment of $S(\omega)$, which is 0 for $i+j$ odd.  Recognizing this, elementary row and column operations can be used to transform the covariance matrix into the block matrix
\begin{align}
	\vR_L \cdots \vR_1 \, \Ks_{H,N}(0) \, \vC_1 \cdots \vC_L &= \begin{pmatrix}\vA & 0\\0 & \vB\end{pmatrix}
\end{align}
with $\vR_i$ and $\vC_i$ being elementary row/column operations.  The inverse of the altered matrix is then easily taken block by block.  Subsequent application of the inverse row and column operations return the desired inverse.  When $b\neq 0$ we do not have this sparsity present as we have chosen to work with the reduced, yet equivalent, SSM.  Understanding its structure can still help us to achieve improved numerical stability by enforcing properties we expect that numerical noise may break.  For example,  if $b \neq 0$ then $\Ks_{H,N}(0)$ should be purely imaginary at indices such that $i+j$ is odd. Hence, its inverse should retain this structure and if numerical noise causes these entries to become real, then they are easily masked.
\begin{figure}[H]
	\centering
	\includegraphics[width=1.0\textwidth]{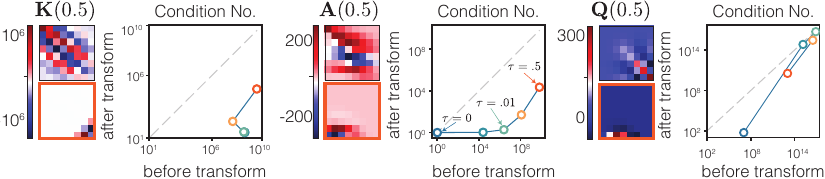}
	\caption{
	Illustrative figure showing the ill numerical conditioning if a coordinate transformation is not used for a \HM kernel $k_{H,8}(\tau; \, 1, 0)$. To the left of condition number plots are example matrices for $\tau=0.5$; in the orange box ($\color{orange}{\square}$) is the matrix before the correlation transform, above is after. Values of $\tau$ plotted are 0.0, 0.001, 0.01, 0.1, and 0.5. \textbf{left}: $\mathbf{K}(\tau)$ clearly benefits from the correlation transform, as evident in both the heatmap of the matrix and its condition number as a funciton of $\tau$. \textbf{middle}: $\vA(\tau)$ without the correlation transform has elements of increasing magnitude in the bottom left; the correlation transform amends this by creating a more homogenuous matrix in terms of relative magnitudes. \textbf{right}: $\mathbf{Q}(\tau)$ benefits the least from the correlation transform, however, it is clearly more balanced as seen from the heatmap.
	}
	\label{fig:condition_numbers}
\end{figure}
\subsection{Kalman updates}
In performing Kalman filtering over $M$ time-steps, the updated covariance, $\vP_m$ at time-step $m$, is not guaranteed to retain the positive semidefinite structure of a proper covariance matrix.  Often, the Joseph form of the covariance update or square root filtering can be used to ensure that positive semidefiniteness is not lost~\citep{anderson_moore_state_space_book}.  However, since $\vh$ is sparsely populated (i.e. a kernel which is the sum of $L$ \HM kernels will have $L$ nonzero values), the equations for the updated mean and covariance, $\vm_m$ and $\vP_m$, can be simplified for additional computational savings and superior numerical stability.

\begin{algorithm}
    \caption{Kalman filtering algorithm for GP regression with \HM kernels}
	\hspace*{\algorithmicindent} \textbf{Input} $\{t_i, y_i\}_{i=1}^M, \, \Ks(\tau), \, \sigma^2$\;
    \label{alg::main_paper::kalman_rts_algorithm}
    \begin{multicols}{2}

		\center{Kalman Filtering}
        \begin{algorithmic}[1]
            \State $\hIndexSet \leftarrow \text{where}(\vh == 0)$
            \State $\vP_0 \leftarrow \Pinf$
            \State $\vm_0 \sim \mathcal{N}(\vm_0 \mid 0, \Pinf)$
			\For{\texttt{$i = 1, \ldots, M$}}
				\State $\tau_i \leftarrow t_i - t_{i-1}$
				\State $\vA_i(\tau_i) \leftarrow \Ks(\tau_i) \Ks(0)^{-1}$
				\State $\vQ(\tau) \leftarrow \Ks(0) - \Ks(\tau_i)\Ks(0)^{-1}\Ks(\tau_i)^H$
				\State $\vP_i^{-} \leftarrow \vA_i(\tau_i) \vP_{i-1} \vA_i(\tau_i)^H + \vQ(\tau_i)$
				\State $\vm^{-}_i \leftarrow \vA_i \vm_{i-1}$
				\State $\alpha \leftarrow \left(\sum_{k,l \in \hIndexSet} \vP_i^{-}[k,l] + \sigma^2\right)^{-1}$
				\State $\beta \leftarrow \left(y_i - \sum_{k \in \hIndexSet} \vP_i^{-}[:, k]\right)$
				\State $\vm_i \leftarrow \vm_i^{-} + \alpha\beta \sum_{k \in \hIndexSet} \vP_i^{-}[:, k]$
				\State $\vP_i \leftarrow \vP_i^{-} - \alpha \sum_{k\in \hIndexSet} \vP_i^{-}[:,k] \vP_i^{-}[:,k]^\top$
		  	\EndFor
			\columnbreak

			RTS Smoothing
            \For{$i = M-1, \ldots, 1$}
				\State $\vG \leftarrow \vP_i \vA_i^\top  [\vP_i^{-}]^{-1}$
				\State $\vm_i \leftarrow \vm_i  + \vG\left(\vm_{i+1} - \vm^{-}_i\right)$
				\State $\vP_i \leftarrow \vP_i + \vG\left(\vP_{i+1} - \vP^{-}_{i}\right)\vG^\top$
			\EndFor
        \end{algorithmic}
    \end{multicols}
\end{algorithm}

In Algorithm~\ref{alg::main_paper::kalman_rts_algorithm}, which outlines the Kalman filtering/RTS smoothing algorithm to return the marginal posterior means and covariances in a GP regression setting, we can see the succint updates for $\vm_i$ and $\vP_i$.  The updated posterior mean, $\vm_i$, is a sum of the predicted mean, $\vm_i^-$, and the indexed columns of the predicted covariance, $\vP_i^-$.  Similarily, the updated covariance is a rank $L$ update of predicted covariance, where $L$ is the number of \HM mixands.
\section{Relation to Other Kernels}
Previously we noted that many  frequently used kernels either reside in the \HM family or can be approximated appropriately.  Take as an example the squared exponential kernel, it is well known that the standard \Matern family of kernels with smoothness parameter $\nu$ approaches a squared exponential as $\nu \rightarrow \infty$.  Remembering this, it is clear that the Spectral Mixture family of kernels is an asymptotic limit of a \HM mixture as the smoothness parameter $p_i \rightarrow \infty, \, \forall i$~\citep{Wilson2013-ud}.

The cosine kernel, $k(\tau) = \sigma^2 \cos(b\tau)$, is also not directly within the \HM family, but it is approached in the limit $a \rightarrow 0$ for $\sigma^2\,k_{H, p_i}(\tau; \, a, b)$. As another example, take the periodic kernel defined as $k(\tau) = \sigma^2 \exp(-\tfrac{2}{l^2}\sin^{-1}(\tfrac{b}{2}\tau))$, which can be expanded using its Taylor series representation~\citep{solin_periodic_state_space}
\begin{align}
	k_{PER}(\tau) &= \sigma^2 \exp\left( - \frac{2 \sin^2 (\omega_0\tfrac{\tau}{2})}{l^2}\right)\\
	&= \exp(-l^2) \sum_{q=0}^{\infty} \frac{1}{q!} \cos^q(\omega_0\tau)\\
	&= \exp(-l^2) \sum_{q=0}^{\infty}\sum_{v=0}^q {q \choose v}\frac{1}{q! 2^q} \text{Re}\left(e^{j\omega_0\tau(q-2v)}\right)\\
	&\approx  \exp(-l^2) \sum_{q=0}^{L}\sum_{v=0}^q {q \choose v}\frac{1}{q! 2^q} \text{Re}\left(e^{j\omega_0\tau(q-2v)}\right)\\
	&= \exp(-l^2) \sum_{q=0}^{L}\sum_{v=0}^q {q \choose v}\frac{1}{q! 2^q} k_{H,p}(\tau;\, a\rightarrow 0, b=\omega_0(q-2\nu))
\end{align}
Which illustrates how a kernel such as the periodic covariance can be decomposed such that it can be approximated by a \HM mixture.
\begin{table}[H]
	\begin{minipage}{.5\linewidth}
		\centering
		\begin{tabular}{@{}ll@{}}
		\toprule
		\textbf{kernel} & $k(\tau)$ \\ \midrule
		Squared Exp.   & $\sigma^2 \, \exp(-\frac{1}{2l^2}\tau^2)$\\

		Rational Quadr.    & $\left(1 + \frac{\tau^2}{2\alpha l^2}\right)^{-\alpha}$\\[1ex]

		Gabor.   & $\sigma^2 \cos(2\pi b) \, \exp(-\frac{1}{2l^2}\tau^2)$\\[1.525ex]

		Sinc & $\sigma^2 \text{sinc}(\Delta\tau) \cos(2\pi b \tau)$\\[1.525ex]
		\bottomrule
		\end{tabular}
	\end{minipage}
	\begin{minipage}{.5\linewidth}
		\centering
		\begin{tabular}{@{}ll@{}}
		\toprule
		\textbf{kernel} & $k(\tau)$ \\ \midrule
		Mat\'ern $(p+\tfrac{1}{2})$   & $\exp \left(-\frac{\sqrt{2 \nu} \tau}{\ell}\right) \frac{\Gamma(p+1)}{\Gamma(2 p+1)} \times$\\

		 & $\sum_{i=0}^{p} \frac{(p+i) !}{i !(p-i) !}\left(\frac{\sqrt{8 \nu} \tau}{\ell}\right)^{p-i}$\\

		Mat\'ern & $\frac{2^{1-\nu}}{\Gamma(\nu)}\left(\frac{\sqrt{2 \nu} \tau}{\ell}\right)^{\nu} K_{\nu}\left(\frac{\sqrt{2 \nu} \tau}{\ell}\right)$\\

		Spectral Mix.      & $\sum \sigma_i^2 \cos(2\pi b_i) \, \exp(-\frac{1}{2l_i^2}\tau^2)$\\
		\bottomrule
		\end{tabular}
	\end{minipage}
    \caption{Functional form of stationary kernels used throughout the paper. $\alpha > 0$, $l > 0$, $\nu > 0$, and  $K_\nu$ is the modified Bessel function of the second kind. See ~\cite{sinc_gp_bui_2018} for the Sinc kernel, ~\cite{Wilson2013-ud} for the Spectral Mixture kernel, and ~\cite{Rasmussen2005-mq} for further details on other kernels.}
    \label{tbl:kernels}
\end{table}

Now consider the LEG family of kernels introduced in ~\citet{cunningham_leg_kernel_2020}.  Take $\vz(t)$ to be the solution of a linear SDE driven by Brownian motion.  Linear transformation of $\vz(t)$ and addition of Gaussian noise gives an observed process $\vx(t)$ so that
\begin{align}
	\label{eq::main_paper::leg_latent}
	d\vz(t) &= -\tfrac{1}{2} \vG \, \vz(t) dt + \vN d\WienerProcess(s)\\
	\label{eq::main_paper::leg_observation}
	\vx(t) &= \vB\vz(t) + \bLambda\boldsymbol{\epsilon}(t)\\
	\boldsymbol{\epsilon}(t) &\sim \mathcal{N}(\boldsymbol{\epsilon}(t) \rvert \vzero, \vI)
\end{align}
In which case it is said that $\vx(t) \sim \text{LEG}(\vN, \vR, \vB, \bLambda)$ where $\vG = \vN\vN^\top + \vR - \vR^\top$.  The equivalence to a MO-MHM kernel is immediate from the discussion earlier on SDEs; the number of mixands determined by the Jordan block structure of $\vG$, with their hyperparameters determined by the eigenvalues of $\vG$. 

To make the equivalence more concrete, decompose $-\tfrac{1}{2}\vG$ into $\vC \vJ \vC^{-1}$ where $\vJ$ is a Jordan block matrix.  Say that $\vJ$ has $m$ blocks, each of size $p_i$, $i=1, \ldots, m$.  Then, a GP, $f(t)$, with \HM mixture kernel containing $m$ mixands, the $i-$th having order $p_i$, has an SSM formulation equivalent to the solution of a linear SDE with dynamics matrix $\vF$ such that $\vF = \vD \vJ \vD^{-1}$.  Taking $\vX\vD = \vC$, the SSM formulation of the vector process $\gs(t) = \vX \fs(t)$ will then be the solution of the SDE as defined in Eq.~\eqref{eq::main_paper::leg_latent}.  Further, setting $\vH = \vB$ and taking the covariance of $\boldsymbol{\nu}$ to be $\bLambda \bLambda^\top$ shows GPs defined either way are equivalent.

\subsection{Approximating arbitrary kernels}\label{sec:optimization}
Imagine a scenario where a kernel, $k_{\text{ref}}(\tau; \, \theta)$, has been designed, its hyperparameters chosen, and we wish to make GP inference under this kernel.  If the data set is small, exact GP inference can be used, however, scalability quickly becomes a concern and only approximate methods are viable.  When the kernel of interest can be approximated well by linear combinations of \HM kernels, then the SSM formulation presented is an appealing avenue.

Under this scenario, the question becomes how can the parameters of a \HM mixture, $k_{H, N, \vp, \vc}(\tau)$, be estimated so that it closely matches $k_{\text{ref}}(\tau; \, \theta)$. A practical, yet simple manner of estimating these hyperparameters is to minimize the squared loss between the reference kernel and the \HM mixture with respect to the mixtures hyperparameters. This results in the following optimization problem,
\begin{align}\label{eq::main_paper::l2_minimization}
	\nu = \argmin_\nu \int (k_{\text{ref}}(\tau; \, \theta) - k_{H, N, \vp, \vc}(\tau))^2 d\tau
\end{align}
where $\nu$ contains all hyperparameters of the \HM mixture to optimize. Through Parseval's Theorem, we can see that the objective in Eq.~\eqref{eq::main_paper::l2_minimization} not only minimizes the squared distance between the mixture and target kernels but also the squared distance of their PSDs~\citep{oppenheim_dsp_14}.
\begin{figure}[H]
	\centering
	\includegraphics[width=1.0\textwidth]{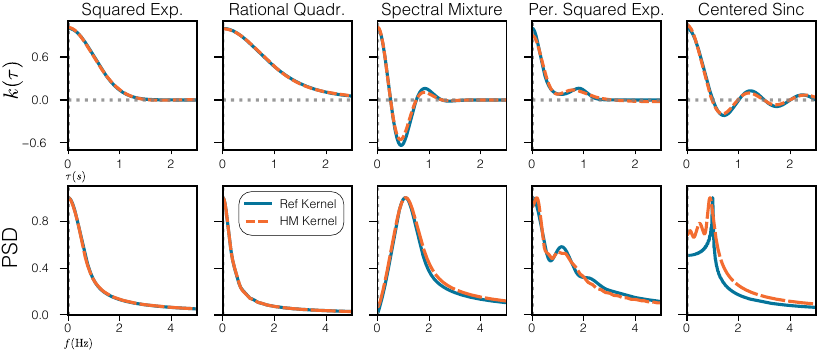}
	\caption{
	A \HM mixture of four mixands fit to various stationary kernels.  Each mixand is a second order \HM kernel and its parameters ($a$, $b$, $\sigma$) were optimized so that the squared distance between the mixture and reference kernel were minimized.
	}
	\label{fig:optimized_kernels}
\end{figure}
In Fig.~\ref{fig:optimized_kernels} are plotted \HM mixtures containing four mixands fit to various stationary kernels.  Though some of the reference kernels would have been approximated better through a single \HM kernel (take for example the Spectral Mixture kernel) the kernel functions and their PSDs, visually, are matched well.
\section{Historical Remarks}\label{sec:history}
In passing we have discussed the canonical representation of GPs.  Historically, many different representations and constructions of GPs have been used to understand their properties.  One of the most famous examples is the Kahrunen-Loeve expansion (KL) in which a stationary GP is decomposed as an infinite sum of randomly weighted basis functions.  Prior to Kahrunen and Loeve's work however, a similar representation was explored in the work of Kosambi but was not quite fully fleshed out~\citep{kosambi_1943}.

The representations we have mainly focused on are those that express GPs in terms of integral and differential operators.  Typically cited as one of the first works exploring this avenue is that of Doob, where he considers stationary and finitely differentiable GPs who can be formally described by a differential equation
\begin{align}
	\frac{d^N}{dt^N} f(t) - a_1 \frac{d^{N-1}}{dt^{N-1}} f(t) - \cdots - a_N f(t) = c \frac{d}{dt}\WienerProcess(t)
\end{align}
which should only be read symbolically as the derivative of Brownian motion does not exist~\citep{doob_1944}.  Subsequently, P. L\'evy being concerned with uniqueness of GP constructions developed what he called a canonical representation.  This representation allowed for specification of a stronger Markov property than that of Doob by dropping the restriction on stationarity of the process~\citep{levy_canonical_gps,hida_canonical_representation_1960,hida_gp_book}.

Tangentially related, and almost in parallel to Levy's development of a canonical GP representation was Woodbury's exploration of the connection between GP covariance functions and the Green's function of a suitably defined adjoint equation~\citep{woodbury_greens_fn_and_gps_1952}.  Approaching the problem similarily, T. Hida was able to determine an appropriate set of basis over GP covariance functions that are stationary and finitely differentiable~\citep{hida_canonical_representation_1960}.  

Following Hida's work, relationships between L-splines and realizations of sample functions from GPs defined by an appropriate SDE were made concrete in~\citet{wahba_spline_smoothing_state_space_1978}.
Viewing L-splines as realizations of GP sample functions, Weinert formulated L-spline fitting as making inference pertaining to an equivalent SSM formulation subsequently allowing for inference in $\mathcal{O}(N)$ time~\citep{weinert_part_1_smoothing_splines_1978,weinert_part_2_smoothing_splines_1980}.
Shortly after, Ansley considered the same SSM representation of a scalar GP although with a novel smoothing algorithm and further theoretical insights~\citep{ansley_spline_state_space_1987}.

More recently, in the context of machine learning, \citet{sarkka_hartikainen_ssms_2010} consider SDEs that have solutions whose stationary covariance coincides with the GP of interest.
Further work considered GPs' infinitely differentiable in mean square whose power spectral density is an analytic function; use of Pade approximations facilitated forming approximate SSMs for the purpose of posterior inference~\citep{karvonen_pade_approximations_15}.
Building more on these concepts later works were able to determine an SDE parameterization for periodic covariances, leverage the SSM representation in conjuction with approximate inference for fast inference in non-conjugate models, and used infinite horizon updates of the Kalman filter and Rauch-Tung-Striebel (RTS) smoother to ameliorate computation time in models with higher dimensional latent spaces~\citep{solin_periodic_state_space,solin_vi_ssm_gp_20,solin_infinite_horizon_gp}.
In~\citet{samoGeneralizedSpectralKernels2015} generalized spectral mixture kernels were explored which include the functional form identical to the \HM kernel, under the name \textit{Spectral Mat\'ern} kernel.
However, they did not touch upon the criticial implication that the \HM kernel forms a complete basis over finitely differentiable and stationary GPs.
In addition, the authors extended upon that work in~\citet{samo_p_markov_2015} where they explore consequences of the GP Markov property, but only in the restricted sense.
Only exploring the Markov property in the restricted sense leaves open questions about how the Markov property can be explained or reasoned about once GPs are added together or more pathological cases where differentiability of the GP arise.

\section{SSMs for GPs with Multivariate Input}
Thus far we have explored the SSM representation for GPs where the index variable is a scalar.  In this setting defining an ordering over the index variable that defines a Markov property is simple and intuitive; at time $t_0$ if $t_1 > t_0$ then $f(t_1)$ is the future of the process while if $t_2 < t_0$ then $f(t_2)$ is the past.  When we consider $\mathcal{X} \subset \reals^D$ with $D > 1$ it is not immediately obvious how to similarily define an appropriate Markov property for elements of $\mathcal{X}$.  

In ~\cite{pitt_multidimensional_markov_71}, explored was the notion of a GP Markov property where the index variable is not a scalar. Let $\mathcal{M}_i$, $i \in \naturalNumbers$ be $D-1$ dimensional manifolds in $D$ dimensional euclidian space such that if $\vx_i \in \mathcal{M}_i$ and $\vx_{i+1} \in \mathcal{M}_{i+1}$ then $\lvert \vx_i \rvert < \lvert \vx_{i+1} \rvert$.  Note that with concentric manifolds defined as such that we must have $\text{hull}({\mathcal{M}_i}) \subset \text{hull}({\mathcal{M}_{i+1}})$. Intuitively then, we can see how notions of `past' and `future' might be defined by viewing observations laying along any of $\mathcal{M}_1, \ldots, \mathcal{M}_{i-1}$ as the past with respect to $\mathcal{M}_i$.

\begin{figure}[H]
	\centering
	\includegraphics[width=1.0\textwidth]{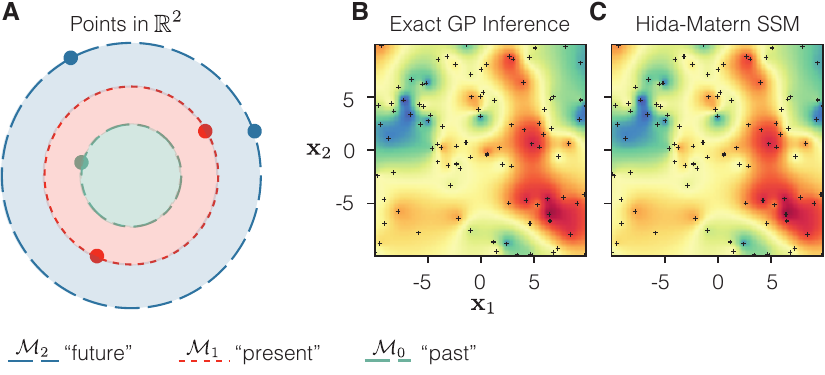}
	\caption{\textbf{A)} Cartoon illustration of the notions of `past', `future', and `present' in $\reals^2$. \textbf{B)} Black markers (\textbf{+}) depict spatial observations in $\reals^2$.  Data is generated from the two dimensional \Matern $\tfrac{3}{2}$ kernel, $k(\vx) = (1+\sqrt{3} \Vert \vx \Vert_2)\exp(-\sqrt{3}\Vert\vx\Vert_2)$, with points sampled on $[-5,5]^2$. First, we compute the posterior using exact GP inference over the spatial domain; plotted is the posterior mean. \textbf{C)} Inference over the same data as in panel \textbf{B}, but this time the SSM representation was used in combination with Kalman filtering/RTS smoothing to recover the posterior.  We see the posterior recovered under the SSM approach is the same as that under exact GP inference.}
	\label{fig::main_paper::multi_d_hm}
\end{figure}

Hence, if we have points laying in $D$ dimensional space it is easy to define concentric series of manifolds $\{\mathcal{M}_i\}$ with $\mathcal{M}_i = \{\vx \, : \, |\vx| = c_i\}$ so that for a simple Gaussian Markov process $p(f(\mathcal{M}_i) \rvert f(\mathcal{M}_{i-1}), \ldots, f(\mathcal{M}_1)) = p(f(\mathcal{M}_i) \rvert f(\mathcal{M}_{i-1}))$ .  To lay down some notation, define a differential operator $\mathcal{D}^{\balpha}$, taken to mean $\mathcal{D}^{\balpha}f(\vx) = \frac{\partial^M}{\partial^{\alpha_1} \cdots \partial^{\alpha_D}} f(\vx)$, with $\balpha \in \reals^D$ and each $\alpha_i \in \naturalNumbers$ such that $M = \sum_i \alpha_i$.

Let's play the same game as we did in the univariate case and see how we can take advantage of Pitt's Markov property to achieve fast inference when the index variable is no longer scalar.  Going back to our favorite \Matern $\tfrac{3}{2}$, its multivariate analogue is $k(\vx) = (1+ \sqrt{3}\Vert \vx\Vert_2) \exp(-\sqrt{3} \Vert \vx \Vert_2)$ which has three partial derivatives, $\frac{\partial}{\partial \vx_1}k(\vx)$, $\frac{\partial}{\partial \vx_2}k(\vx)$, and $\frac{\partial^2}{\partial \vx_1 \partial \vx_2}k(\vx)$.  Once these partial derivatives are calculated we can then form the multioutput covariance $\Ks(\vx)$.
\begin{align}
	\Ks(\vx) &= \begin{bmatrix} k(\vx) & \frac{\partial}{\partial \vx_1}k(\vx) & \frac{\partial}{\partial \vx_2}k(\vx)\\
	\frac{\partial}{\partial \vx_1}k(\vx) & \frac{\partial^2}{\partial \vx_1^2}k(\vx) & \frac{\partial^2}{\partial \vx_1 \vx_2}k(\vx)\\
	\frac{\partial}{\partial \vx_2}k(\vx) & \frac{\partial^2}{\partial \vx_1 \partial \vx_2}k(\vx) & \frac{\partial^2}{\partial \vx_2^2}k(\vx)\end{bmatrix}
\end{align}

from which we have that $\fs(\vx) = \Ks(\vx) + \boldsymbol{\epsilon}(\vx)$ where $\fs(\vx) = \begin{bmatrix} f(\vx) & \frac{\partial}{\partial \vx_1}k(\vx) & \frac{\partial}{\partial \vx_2}k(\vx)\end{bmatrix}^\top$.  If we extract $f(\vx)$ from $\fs(\vx)$ via $f(\vx) = \vh^\top \fs(\vx)$ just as in the scalar input case, then all we have discussed so far carries over quite easily; for example, we could form the standard generative model and easily perform GP regression in the case we have observations that can be modeled as $y(\vx) = \fs(\vx) + \nu$ where $\nu$ is Gaussian noise.

In Fig.~\ref{fig::main_paper::multi_d_hm} we can see the result of performing posterior inference over $\reals^2$ with the two dimensional input \Matern $\tfrac{3}{2}$ with exact GP inference as well as the SSM formulation.  Inspecting the posterior mean, we can see that the posterior under the SSM formulation is faithful to the posterior under exact GP inference.  One downside to performing spatial inference in this manner is that observations will need to be sorted according to the $\mathcal{L}_2$ norm of their location; however, even with sorting and making use of Kalman filtering/smoothing the computational cost is still minute compared to naive GP inference.

It is worthwhile noting for historical purposes that there are works outside of Pitt's that consider a Markov property for Gaussian random fields with multidimensional support.  Preceding Pitt's work is that of~\citet{mckean_brownian_motion_several_dim_63} who like Levy explores the characterization of a Markovian property for processes with an odd dimensional index variable, albeit taking a different approach.  It is Pitt's work that builds on McKean's by succesfully characterizing the Markov property of random fields with a support of arbitrary dimension.  Mentions of these works as well as others exploring a Markov property for processes with multidimensional index are briefly summarized in the text of~\citet[Appendix]{adler_random_fields}.  Hida in pursuit of a general theory for representation of white noise processes also considered those defined over a multivariate domain, the curious reader can consult the text~\citet{hida_sisi_white_noise_random_fields} for a fully fleshed out presentation similar to that of his theory of canonical representations of univariate GPs.  Also worth mentioning is the paper,~\citet{lee_1990_random_fields}, that goes more in depth with regards to functional forms of covariance kernels over manifold domains.

\section{Experiments}\label{sec:experiments}
\subsection{Mauna Loa Carbon Dioxide}
Here, we examine the predictive capabilities and expressivity of the \HM family by applying it to the popular Mauna Lua $\mathrm{CO_2}$ dataset~\citep{Rasmussen2005-mq}. The long upward trend present in this data as well as the yearly periodic component mean that an appropriate linear combination of \HM kernels will need to be constructed.  To demonstrate that a low order SSM is sufficient for this dataset we construct a simple kernel that is the sum of two order $3$ \HMs, i.e.
\begin{align*}
	k_{H, 6}(\tau) &= c_1 \, k_{H, 3}(\tau;\, a_1, b) + c_2\, k_{H, 3}(\tau;\,a_2, 0)
\end{align*} 
with $c_1=0.05^2$, $a_1=1 / 25$, $b=2\pi$, and $c_2=2.3^2$, $a_2=1/100$, i.e. one kernel which is decaying periodic with a short lengthscale to capture the yearly periodic trend and another which is not periodic with a long lengthscale to capture the linear trend.  This setup is similar to those commonly used in the literature, where the first additive kernel would usually be a Mat\'ern/Squared Exponential multiplying the periodic covariance function~\citep{Rasmussen2005-mq, sarkka_log_time_gp, solin_periodic_state_space}.
\begin{figure}[H]
	\centering
	\includegraphics[width=1.0\textwidth]{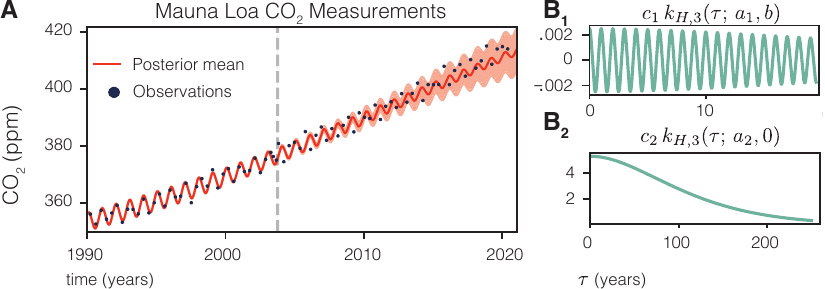}
	\caption{\textbf{A)} Mauna Loa dataset fit until 2004, after which only predictions were made. \textbf{B\textsubscript{1})}   The third order \HM kernel fit with a periodic component. \textbf{B\textsubscript{2})}  The third order \HM kernel fit with no periodic component.}
    \label{fig::main_paper::mauna_lua}
\end{figure}
The GP is fit using the data from 1974-2004, and predictions are made for the time window from 2004-2020 as shown in Fig.~\ref{fig::main_paper::mauna_lua}.  From the fit, we can see that the sum of these two low order Hida-Mat\'ern kernels form a covariance function such that the resulting GP inference can make predictions that capture both the seasonal and linear trends in the data.

\begin{figure}[H]
	\centering
	\includegraphics[width=1.0\textwidth]{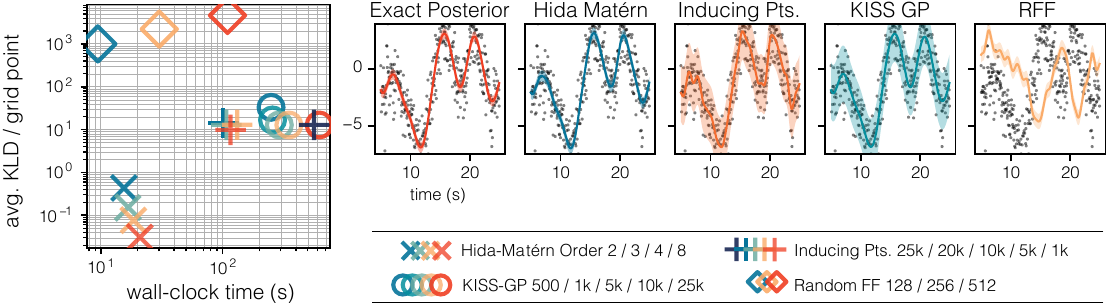}
	\caption{
	\textbf{Left}: Wall clock time versus average KLD, $\KL(q(f(t)) \rvert\lvert p(f(t)))$ with $p(f(t))$ the true posterior, per bin point between each method and the true posterior. \textbf{Right}: Posterior fits for the best parameter settings from each model class -- we see that in comparison to inducing points and KISSGP, although a low order \HM was used it does not overestimate the posterior uncertainty. Numbers in the legend indiciate the total number of (uniformly spaced) inducing points. 
	}
    \label{fig::main_paper::large}
\end{figure}

\subsection{Scalability of SSM representation}
From a practical standpoint, one of the most useful implications that arises from the $N$-ple GP Markov characterization is the ability to quickly formulate a corresponding state-space model amenable for inference.  As a result, it is trivial to form appropriate state-space models that can be used to recover exact GP inference over large datasets where naive GP regression would be impractical due to the cubic scaling of the computational complexity.

We now consider a toy dataset containing 50,000 observations, with uniform spacing of 0.05, that are generated  according to a prior GP whose covariance function is the sum of two spectral mixture kernels, i.e.,~\citep{Wilson2013-ud}
\begin{align}
	k(\tau) &= k_{\text{SM1}}(\tau; \, c_1, l_1, \omega_1) + k_{\text{SM2}}(\tau; \, c_2, l_2, \omega_2)\\
	&\text{with} \quad k_{\text{SM}i} = c_i \exp\left(-\frac{\tau^2}{2l_i^2}\right)\cos(\omega_i \tau)
\end{align}
where the hyperparameters are $c_1=1.5^2$, $l_1=2.0$, $\omega_1=2\pi\cdot0.01$, and $c_2=1.5^2$, $l_2=2.0$, $\omega_2=2\pi\cdot0.05$.  For comparison, we consider SVGPR, KISSGP, as well as random Fourier features~\citep{titsias_sparse_variational_paper,Wilson2015-ak,random_fourier_rahimi}. Experiments for SVGPR and KISSGP are ran using GPyTorch with all hyperparameter optimization done before computing the wall-clock time of the calculation for the posterior distribution over the grid~\citep{gpytorch_python}.  To subserviate any numerical difficulties that would arise from calculating the KLD between the posterior under each method and exact GP inference we instead consider the average KLD per grid point, or average marginalized KLD.

As the data points are distributed on a uniform grid we expect that inference using the SSM and \HM kernels should be the fastest as $\Ks(\Delta)$ only need be computed once.  Indeed, this is the case and it also has the lowest marginal KLD to the true posterior even though there is a model mismatch (as the Spectral Mixture is only an asymptote of the \HM family).

\section{Conclusion}\label{sec:conclusion}
We showed how viewing GPs through the lens of their Markov property has both theoretical and practical consequences.
We reintroduced results from Hida that all finitely differentiable stationary GPs are Markovian and their kernel must admit a decomposition in terms of linear combinations of the derived \HM kernels.
As a consequence, \HM GPs can be simply rewritten as a linear Gaussian state-space model.
The SSM representations enabled us to make exact GP inference in linear time for any 1-dimensional stationary GP whose kernel is in the \HM family.
As a by product of the admitted SSM representation, we also fleshed out connections to SDEs whose solutions are GPs, showing that the fundamental matrix solution of those systems has a closed form representation.
Finally, we showed many commonplace kernels either reside directly within the \HM family or can be seen as appropriate asymptotic limits.
The \HM kernel provides a unifying framework that bridges linear models used in the statistical signal processing literature and the nonlinear kernel methods in the machine learning literature.

\newpage

\newpage
\newpage

\newpage
\appendix
\section*{Appendix}
\addcontentsline{toc}{section}{Appendix}
\renewcommand{\thesubsection}{\Alph{subsection}}

\subsection{Derivation of the Hida-Mat\'ern kernel starting from the basis of canonical filters}\label{app::can_filter_deriv}
Let $f_p(\tau)$ be a canonical filter of order $p$, i.e.,
\begin{align}
	\label{eq::appendix::canonical_filter_p}
	f_p(\tau)  &= \tau^p \exp(-\mu\tau)
\end{align}
the covariance function, $r_p(\tau)$, of a process described with such a canonical filter is
\begin{align*}
	r_p(\tau) = \int_{\tau}^{\infty} f_p(t) f_p^{\ast} (t - \tau) dt 
\end{align*}
where $^\ast$ denotes complex conjugation.  The PSD, $S(\omega)$, of $r_p(\tau)$ is related to the fourier transform of $f_p(\tau)$ so that we have $S(\omega) = F_p(\omega) F^{\ast}_p(-\omega)$ where $F_p(\tau) = \mathcal{F}\left[f_p(\tau)\right]$.  For convenience we proceed by working with the Fourier cosine and Fourier sine transforms denoted by
\begin{align*}
	F_{C,p}(\omega) &= \mathcal{F}^C\left[f_p(\tau)\right] = \int_0^\infty f_p(\tau) \cos(\omega\tau)d\tau\\
	F_{S,p}(\omega) &=\mathcal{F}^S\left[f_p(\tau)\right] = \int_0^\infty f_p(\tau) \sin(\omega\tau)d\tau
\end{align*}
so that $F_p(\omega) = F_{C,p}(\omega) - j F_{S,p}^S(\omega)$~\citep*{oppenheim_dsp_14}. In general by working with this basis we are not guaranteed that the resulting power spectral density of the covariance function will be real and symmetric.  To enforce this constraint  the imaginary part of $F_p(\omega)F^\ast_p(\omega)$ needs to be isolated.  Through the Fourier sine and cosine transformations we have
\begin{align}
	S(\omega) &= (F_{C,p}(\omega) + j F_{S,p}(\omega))(F_{C,p}(\omega) + j F_{S,p}(\omega))^\ast\\
	&= F_{C,p}(\omega)F_{C,p}^{\ast}(\omega) + F_{S,p}(\omega)F_{S,p}^{\ast}(\omega) - jF_{C,p}(\omega)F_{S,p}^{\ast}(\omega) + jF_{S,p}(\omega)F_{C,p}^{\ast}(\omega)\\
	&= \norm{F_{C,p}(\omega)}^2 + \norm{F_{S,p}(\omega)}^2 - jF_{C,p}(\omega)F_{S,p}^{\ast}(\omega) + jF_{C,p}^{\ast}(\omega)F_{S,p}^{\ast}(\omega)
\end{align}
Now,  $\norm{F_{C,p}(\omega)}^2 + \norm{F_{S,p}(\omega)}^2$ must be isolated.  For analytic purposes, this will be more easily achieved by considering the following identity,
\begin{align}
	\label{eq::appendix::expanded_fourier_representation}
	2\left(\norm{F_{C,p}(\omega)}^2 + \norm{F_{S,p}(\omega)}^2\right) = &(F_{C,p}(\omega) + j F_{S,p}(\omega))(F_{C,p}(\omega) + j F_{S,p}(\omega))^\ast\\
	\nonumber
	&+ (F_{C,p}(\omega) - j F_{S,p}(\omega))(F_{C,p}(\omega) - j F_{S,p}(\omega))^\ast
\end{align}
With a canonical filter given in the form of Eq.~\eqref{eq::appendix::canonical_filter_p} we have that it's Fourier cosine and sine transforms respectively are given by ~\cite{bateman_manuscript_vol1}.
\begin{align}
	F_p^C(\omega)&= p! \left(\frac{\mu}{\mu^2 + \omega^2} \right)^2 \sum_{m=0}^{\floor*{0.5(p+1)}} (-1)^m {p+1 \choose 2m} \left(\frac{\omega}{\mu}\right)^{2m}\\	
	F_p^S(\omega)&= p! \left(\frac{\mu}{\mu^2 + \omega^2} \right)^2 \sum_{m=0}^{\floor*{0.5p}} (-1)^m {p+1 \choose 2m + 1} \left(\frac{\omega}{\mu}\right)^{2m+1}
\end{align}
Thus for the subtractive term in Eq.~\eqref{eq::appendix::expanded_fourier_representation},
\begin{align}
	F_{C,p}(\omega) - j F_{S,p}(\omega) &= p! \left(\frac{\mu}{\mu^2 + \omega^2} \right)^2 \sum_{m=0}^{p+1} (-j)^m {p+1 \choose m} \left(\frac{\omega}{\mu}\right)^{m}\\
	&= p! \left(\frac{\mu}{\mu^2 + \omega^2} \right)^2\left (1 - j \frac{\omega}{\mu}\right)^{p+1}
\end{align}
and similarily for the additive term in Eq.~\eqref{eq::appendix::expanded_fourier_representation}
\begin{align}
	F_{C,p}(\omega) + j F_{S,p}(\omega) = p! \left(\frac{\mu}{\mu^2 + \omega^2} \right)^2 \left(1 + j \frac{\omega}{\mu}\right)^{p+1}
\end{align}
Using these expansions Eq.~\eqref{eq::appendix::expanded_fourier_representation} then becomes
\begin{align*}
	2\left(\norm{F_{C,p}(\omega)}^2 + \norm{F_{S,p}(\omega)}^2\right) =& (p!)^2 \left[\left(\frac{\mu}{\mu^2 + \omega^2} \right)\left(\frac{\mu}{\mu^2 + \omega^2} \right)^{\ast}\,\,\right]^{p+1}\\
	&\times \left[\left[\left(1-j\frac{\omega}{\mu}\right)\left(1-j\frac{\omega}{\mu}\right)^{\ast}\,\,\right]^{p+1} + \left[\left(1+j\frac{\omega}{\mu}\right)^{\ast}\left(1+j\frac{\omega}{\mu}\right)\right]^{p+1}\right]
\end{align*}
which upon some simplification we obtain
\begin{align}
	S(\omega) &= \frac{1}{2}(k!)^2 \left(a^2 + b^2\right)^{p+1} \frac{(a^2 + (\omega - jb)^2)^{p+1} + (a^2 + (\omega + jb)^2)^{p+1}}{\left[(a^2+b^2)^4  + (a^2-b^2)\omega^2 + \omega^4\right]^{p+1}}\\
	&= \frac{1}{2}(p!)^2 \frac{(a^2 + (\omega - b)^2)^{p+1} + (a^2 + (\omega + b)^2)^{p+1}}{\left[(a^2+b^2-2b\omega + \omega^2)(a^2+b^2 + 2b\omega + \omega^2)\right]^{p+1}}\\
	&= \frac{1}{2}(p!)^2 \frac{(a^2 + (\omega - b)^2)^{p+1} + (a^2 + (\omega + b)^2)^{p+1}}{\left[(a^2 + (\omega - b)^2)(a^2 + (\omega + b)^2)\right]^{p+1}}\\
	&= \frac{1}{2}(p!)^2  \left(\left[\frac{1}{(a^2 + (\omega + b)^2)}\right]^{p+1} + \left[\frac{1}{(a^2 + (\omega - b)^2)}\right]^{p+1}\right)
\end{align}
where partial fraction simplification was used in the last line~\citep*{proakis_dsp_text_fourth}.
Inspection shows that this PSD is very similar in terms of each summand to the PSD of the Mat\'ern kernel.  Substituting $\zeta_1 = \omega + b$  and $\zeta_2 = \omega - b$ the inverse Fourier transform is easily found.

\subsection{Mixture of stationary \HM kernels are dense}\label{app::mixture_hm_dense}
\begin{theorem*}[Mixture of stationary \HM kernels are dense.]
	For any fixed $p$, \HM kernels are dense in the space of square integrable functions, hence they are dense with respect to $\mathcal{L}_2$ convergence.
\end{theorem*}
\begin{proof}
	From Wiener's Tauberian theorem, we have that if $S \in \mathcal{L}^2(\reals)$ is square integrable then the span of the translations $S(\omega + b)$ is dense in $\mathcal{L}^2(\reals)$ if and only if the real zeros of the Fourier transform of $S$ form a set of Lebesgue measure 0~\citep*{rudin_functional_analysis_91}.
	
	Take a Hida-Mat\'ern kernel, fix $a$ and $p$, then let $k_{\mathcal{H}}(\tau) = \sum_{i=1}^L c_i \, k_{H, p}(\tau ; \, a, b_i)$ and denote the PSD, or the Fourier transform of $k_{\mathcal{H}}(\tau)$ as  $S_{\mathcal{H}}(\omega) = \sum_{i=1}^L c_i \, S_{H,p}(\omega; \, a, b)$.  We can write, $k_{H,p}(\tau; a, b_i) = \exp(-jb_i\tau)k_{H, p}(\tau; \, a, 0) + \exp(jb_i\tau)k_{H, p}(\tau; \, a, 0) (\tau)$.  By the frequency shifting property of the Fourier transform we have that $S_{H, p}(\omega; \, a, b) = S_{H, p}(\omega - b; \, a, 0) + S_{H, p}(\omega + b; \, a, 0)$. 
	
	Furthermore, since $S_{H,p}(\omega; \, a, 0)$ is symmetric about the origin, we have that $\mathcal{F}[S_{H,p}(\omega; \, a, 0)] = \mathcal{F}^{-1}[S_{H,p}(\omega; \, a, 0)] $.  Recognizing that the second term results in the non-oscillatory Hida-Mat\'ern kernel/Mat\'ern kernel in the time domain now makes it clear that the Fourier transform of $S_{H,p}(\omega; \, a, 0)$ is strictly positive and so its real zeros have Lebesgue measure 0.
	
	Now, using Wiener's Tauberian theorem, we have that that the span of translations of $S_{H,p}(\omega)$ is dense in $\mathcal{L}^2(\reals)$.  So, if we have some square integrable kernel $k(\tau)$ with Fourier transform $S(\omega)$, then from Wiener's Tauberian theorem we should be able to find a linear combination of Hida-Mat\'erns such that
	\begin{align}
		\left(\int \mid S(\omega) - \sum_{i=1}^L c_i \, S_{H,p}(\omega; \, a, b) \mid^2 d\omega\right)^{\tfrac{1}{2}} \rightarrow 0
	\end{align}
	However, by now using Parseval's theorem we also have that 
	\begin{align}
		\int \mid S(\omega) - \sum_{i=1}^L c_i \, S_{H,p}(\omega; \, a, b) \mid^2 d\omega = \int \mid k(\tau) - \sum_{i=1}^L c_i \, k_{H,p}(\tau; \, a, b) \mid^2 d\tau
	\end{align}
	which also means that the class of Hida-Mat\'ern kernels are dense in the space of $\mathcal{L}^2(\reals)$.
\end{proof}
\begin{remark*}
	We note a similar way of proving pointwise convergence for any stationary, real valued, positive semidefinite kernel modulated by a cosine was used in~\cite{samoGeneralizedSpectralKernels2015}.
\end{remark*}

\subsection{Stationary Covariance of an $N$-ple Markov GP}\label{app::stationary_cov_hm_gp}
Though the fact that the stationary covariance of the continuous time representation of a GP as in Eq.~\eqref{eq::main_paper::difference_eq} being $\Ks(0)$ is intuitive, it is also true that the stationary covariance of the discretized model as presented for standard GP regression is $\Ks(0)$ -- independent of the spacing of the observations.
\label{prop::Pinf-is-K0}
\begin{proposition}
	For state space models as defined, the stationary marginal covariance of the process, $\fs(t)$,  denoted $\Pinf$ is exactly $\Ks(0)$.
\end{proposition}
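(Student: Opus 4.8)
The plan is to identify $\Pinf$ as the unique solution of the discrete Lyapunov (Stein) equation for the stationary state covariance of the SSM in Eq.~\eqref{eq::main_paper::difference_eq}, and then verify by direct substitution that $\Ks(0)$ solves it. For the stable linear Gaussian SSM with transition matrix $\vA(\tau)=\Ks(\tau)\Ks(0)^{-1}$ and noise covariance $\vQ(\tau)=\Ks(0)-\Ks(\tau)\Ks(0)^{-1}\Ks(\tau)^\top$, the stationary covariance is the unique positive semidefinite $\vP$ satisfying
\begin{equation*}
	\vP = \vA(\tau)\,\vP\,\vA(\tau)^\top + \vQ(\tau),
\end{equation*}
for the given lag $\tau$; uniqueness follows because the SSM is stable, so the affine map $\vP\mapsto \vA(\tau)\vP\vA(\tau)^\top+\vQ(\tau)$ is a contraction with a single fixed point. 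Hence it suffices to show that $\vP=\Ks(0)$ is that fixed point.

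Next I would substitute $\vP=\Ks(0)$ into the right-hand side. The only fact required is that $\Ks(0)$ is symmetric: it is the genuine instantaneous covariance $\Expect[\fs(t)\fs(t)^\top]$ of the stacked process (equivalently, the entries $k^{(i+j)}(0)$ with $i+j$ odd vanish, since a real stationary kernel is even). Symmetry gives $\vA(\tau)^\top=\Ks(0)^{-1}\Ks(\tau)^\top$, so the transition term telescopes,
\begin{equation*}
	\vA(\tau)\,\Ks(0)\,\vA(\tau)^\top = \Ks(\tau)\Ks(0)^{-1}\Ks(0)\Ks(0)^{-1}\Ks(\tau)^\top = \Ks(\tau)\Ks(0)^{-1}\Ks(\tau)^\top,
\end{equation*}
which is precisely the subtracted term inside $\vQ(\tau)$. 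Adding $\vQ(\tau)$ cancels it and returns $\Ks(0)$, confirming that $\Ks(0)$ is a fixed point. Since this identity holds for every $\tau$, it simultaneously establishes the spacing-independence emphasized just before the statement.

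I would conclude $\Pinf=\Ks(0)$ by appealing to uniqueness, and offer a probabilistic sanity check as intuition: because $f$ is stationary and differentiation is a linear time-invariant operation, the vector $\fs(t)=(f(t),\dots,f^{(N-1)}(t))^\top$ is itself strictly stationary, so its marginal covariance equals $\Ks(0)$ at every $t$, in particular in the limit $t\to\infty$. The algebra here is routine; the only point genuinely requiring care is the uniqueness/stability step, which is what upgrades ``a stationary covariance equals $\Ks(0)$'' to ``\emph{the} stationary covariance equals $\Ks(0)$.'' This is exactly the stability already invoked in the main text, under which the Neumann-type series $\Pinf=\sum_{k\geq 0}\vA(\tau)^k\,\vQ(\tau)\,(\vA(\tau)^\top)^k$ converges to the unique solution.
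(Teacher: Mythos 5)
Your proof is correct, and it reaches the result by the same gateway as the paper --- the discrete Lyapunov equation $\Pinf = \vA(\Delta)\Pinf\vA(\Delta)^\top + \vQ(\Delta)$ --- but the two arguments diverge at the key step. The paper treats the equation as one to be \emph{solved}: it substitutes the definitions of $\vA$ and $\vQ$, isolates the residual $\vY = \Ks(0)^{-1}\Pinf\Ks(0)^{-1} - \Ks(0)^{-1}$, vectorizes to get $\bigl(\Ks(\Delta)\otimes\Ks(\Delta)^\top - \Ks(0)\otimes\Ks(0)\bigr)\vect(\vY)=\vzero$, and concludes $\vY=\vzero$ from a full-rank claim on the Kronecker difference. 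You instead \emph{guess and verify}: a one-line telescoping computation shows $\Ks(0)$ is a fixed point (using symmetry of $\Ks(0)$, which you correctly justify via vanishing odd moments), and uniqueness for a stable system, via the Neumann series, does the rest. The two routes rest on the same underlying fact --- the invertibility of the Lyapunov operator, equivalently that no product of two eigenvalues of $\vA(\Delta)$ equals one, which is exactly stability --- but your version makes that dependence explicit and standard, whereas the paper's full-rank assertion on $\Ks(\Delta)\otimes\Ks(\Delta)^\top - \Ks(0)\otimes\Ks(0)$ is stated without justification and is in fact the only nontrivial point of its argument. Your approach is the more economical and the more carefully grounded of the two; the paper's buys an explicit algebraic derivation that does not require positing the answer in advance. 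One small caution: the symmetry of $\Ks(0)$ (equivalently $\vA(\Delta)^\top = \Ks(0)^{-1}\Ks(\Delta)^\top$) is genuinely needed for your cancellation, and you were right to flag it rather than treat it as free.
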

\begin{proof}
	The stationary marginal covariance satisfies the discrete Lyapunov equation,
	\begin{equation}
		\vA(\Delta) \Pinf \vA(\Delta)^\top + \vQ(\Delta) - \Pinf = \vzero
	\end{equation}
	By substituting $\Ks(\Delta)\Ks(0)^{-1}$ for $\vA(\Delta)$ and expanding $\vQ(\Delta)$ we get that
	\begin{equation}
		\Ks(\Delta)\Ks(0)^{-1} \Pinf \Ks(0)^{-1} \Ks(\Delta)^\top + \Ks(0) - \Ks(\Delta)\Ks(0)^{-1}\Ks(\Delta)^\top - \Pinf = \vzero
	\end{equation}
	which can be factored as
	\begin{align}
		\Ks(\Delta)(\Ks(0)^{-1} \Pinf \Ks(0)^{-1} - \Ks(0)^{-1} ) \Ks(\Delta)^\top + \Ks(0) - \Pinf = \vzero
	\end{align}
	Letting $\vY = \Ks(0)^{-1} \Pinf \Ks(0)^{-1} - \Ks(0)^{-1}$ gives us that $\Pinf = \Ks(0)(\vY + \Ks(0)^{-1})\Ks(0)$ upon whose substition we find
	\begin{align}
		\Ks(\Delta) \vY \Ks(\Delta)^\top+ \Ks(0) - \Ks(0)\vY\Ks(0) - \Ks(0) &= \vzero\\
		\label{eq::proof::Pinf_K0_in_Y}
		\Ks(\Delta)\vY \Ks(\Delta)^\top - \Ks(0)\vY \Ks(0) &= \vzero
	\end{align}
	Vectorizing Eq.~\ref{eq::proof::Pinf_K0_in_Y} we now find that
	\begin{align}
		\left(\Ks(\Delta) \otimes \Ks(\Delta)^\top - \Ks(0) \otimes \Ks(0)\right) \vect({\vY} )= \vzero
	\end{align}
	which means that since a unique solution of $\vect(\vY)$ must exist, then the only possibility is that $\vect(\vY) = \vzero$ since its premultiplier is of full rank and has no nullspace. This then means that
	\begin{align}
		\vect(\vY) &= \vzero\\
		\vY &= \vzero\\
		\Ks(0)^{-1} \Pinf \Ks(0)^{-1} - \Ks(0)^{-1} &= \vzero\\
	\end{align}
	giving us the result
	\begin{equation}
		\Pinf = \Ks(0)	
	\end{equation}
	
	\begin{remark}
		We see that the stationary covariance is invariant to the choice of $\Delta$, as such we could have used the continuous or discrete lyapunov equations to solve for the stationary covariance.  Indeed, plugging in this solution to the continuous time Lyapunov equation is consistent.
	\end{remark}
\end{proof}

\subsection{Numerically stable Kalman updates}\label{app::kalman_updates}
Taking advantage of the sparsity of the observation extraction vector $\vh$ can also aid in reducing numerical noise by recognizing that its sparsity leads to low rank Kalman updates of the predicted covariance and simplified equations for the updates of the mean.

\textbf{Kalman Equations}
Take $\vm_k^-$ and $\vP_k^-$ to be the mean and covariance prediction at $t_k$ with $\vm_k$ and $\vP_k$ to be the updated mean and covariance and $t_k$.  Then with $\vA(\Delta_k) = \Ks(t_{k+1} -  t_k)\Ks(0)^{-1}$ and $\vQ(\Delta_k) = \Ks(0) - \Ks(t_{k+1} -  t_k) \Ks(0)^{-1} \Ks(t_{k+1} -  t_k)^H$ the Kalman recursions follow 

\begin{align*}
	\vm_k^- &= \vA(\Delta_k) \vm_{k-1}\\
	\vP_k^- &= \vQ(\Delta_k) + \vA(\Delta_k) \vP_k \vA(\Delta_k)^\top\\
	\nu_k &= y_k - \vh^\top\vm_k^-\\
	S_k & = \vh^\top \vP_k^- \vh + R\\
	\mathcal{K}_k &= \vP_k^- \vh S_k^{-1}\\
	\\
	\vm_k &= \vm_k^- + \mathcal{K}_k \nu_k\\
	\vP_k  &= \vP_k^- -\mathcal{K}_k \vh^\top \vP_k^-\\
	&= (\vI - \mathcal{K}_k\vh^\top)\vP_k^- (I - \mathcal{K}_k\vh^\top)^\top  + \mathcal{K}_k R \mathcal{K}_k^\top \quad\quad \text{(Joseph form)}
\end{align*} 
In general we would not use the standard covariance update because numerically it will not guarantee $\vP_k$ is PSD.  With that said, let $\mathcal{Z}$ be the set of indices where $\vh$ has non-zero elements and let's first expand the equation for $\vm_k$
\begin{align*}
	\vm_k &= \vm_k^- + \mathcal{K}_k \nu_k\\
	&= \vm_k^- + \vP_k^-\vh S_k^{-1} \nu_k\\
	&= \vm_k^- + \vP_k^- \vh  (\vh^\top \vP_k^- \vh + R)^{-1} \nu_k\\
	&= \vm_k^- + \vP_k^- \vh  (\vh^\top \vP_k^-\vh + R)^{-1} (y_k - \vh^\top \vm_k^-)\\
	&= \vm_k^- + \sum_{i \in \mathcal{Z}} \vP_k^-[:, i] \left(\sum_{i , j \in \mathcal{Z}}\vP_k^-[i, j] + R\right)^{-1} \left(y_k - \sum_{i \in \mathcal{Z}} \vm_k^-[i]\right)
\end{align*}
Now take $\alpha = \left(\sum_{i , j \in \mathcal{Z}}\vP_k^-[i, j] + R\right)^{-1}$ and $\beta = \left(y_k - \sum_{i \in \mathcal{Z}} \vm_k[i]\right)$ and we get that

\begin{align*}
	\vm_k &= \vm_k^- + \alpha\beta \sum_{i \in \mathcal{Z}} \vP_k^-[:, i]
\end{align*}
Making it obvious that it is sufficient to work with select columns and elements of $\vP_k^-$ and that the update is simply the sum of select scaled columns of $\vP_k^-$.

Let's do the same for the updated covariance,

\begin{align*}
	\vP_k &= \vP_k^- - \mathcal{K}_k \vh^\top \vP_k^-\\
	&= \vP_k^- - \vP_k^- \vh S_k^{-1} \vh^\top \vP_k^-\\
	&= \vP_k^- - \vP_k^- \vh (\vh^\top \vP_k^- \vh + R)^{-1} \vh^\top \vP_k^-\\
	&= \vP_k^- - \alpha \vP_k^- \vh \vh^\top \vP_k^-\\
	&= \vP_k^- - \alpha \sum_{i \in \mathcal{Z}} \vP_k^-[:, i] \vP_k^-[:, i]^\top
\end{align*}
Now, it is obvious that the updated covariance $\vP_k$ consists of subtracting a $\text{rank}(|\mathcal{Z}|)$ matrix from $\vP_k^-$ which is simply the sum of  outer products of select columns of $\vP_k^-$.

\end{document}